\documentclass[10pt]{article} 
\usepackage[accepted]{tmlr}


\usepackage{amsmath,amsfonts,bm}









\def\eqref#1{equation~\ref{#1}}









\def\1{\bm{1}}










\DeclareMathAlphabet{\mathsfit}{\encodingdefault}{\sfdefault}{m}{sl}
\SetMathAlphabet{\mathsfit}{bold}{\encodingdefault}{\sfdefault}{bx}{n}











\newcommand{\R}{\mathbb{R}}



\usepackage{hyperref}
\usepackage{url}
\usepackage{amsthm}
\usepackage{graphicx}
\usepackage{booktabs}  
\usepackage{multirow}
\usepackage{listings}
\usepackage{algpseudocode}
\usepackage{algorithm}
\usepackage{relsize}
\usepackage{subcaption}
\usepackage[utf8]{inputenc} 
\usepackage[T1]{fontenc}    
\usepackage{hyperref}       
\usepackage{url}            
\usepackage{booktabs}       
\usepackage{amsfonts}       
\usepackage{nicefrac}       
\usepackage{microtype}      
\usepackage{xcolor}         
\usepackage{amsmath,amsthm,amsfonts,thmtools}
\usepackage{amssymb}
\usepackage{pifont}
\usepackage{listings}
\usepackage{inconsolata} 

\lstset{
    basicstyle=\ttfamily\footnotesize,
    breaklines=true,
    frame=single,
    backgroundcolor=\color{gray!10},
    keywordstyle=\color{blue},
    commentstyle=\color{gray},
    stringstyle=\color{red},
    showstringspaces=false,
    tabsize=4,
    captionpos=b,
}

\newcommand{\sam}[1]{{\color{black} #1}}

\newcommand{\cG}{\mathcal{G}}
\newcommand{\cK}{\mathcal{K}}
\newcommand{\cL}{\mathcal{L}}

\newcommand{\cQ}{\mathcal{Q}}
\newcommand{\cR}{\mathcal{R}}
\newcommand{\C}{\mathbb{C}}

\renewcommand{\R}{\mathbb{R}}

\newcommand{\slot}{\,\cdot\,}

\newtheorem*{definition}{Definition}
\newtheorem*{theorem*}{Theorem}
\newtheorem*{proposition*}{Proposition}
\newtheorem{theorem}{Theorem}[section]

\newtheorem{lemma}[theorem]{Lemma}

\theoremstyle{definition}
\newtheorem{remark}{Remark}

\title{Merging Memory and Space: A State Space Neural Operator}


\author{\name Nodens F. Koren \email nodens.koren@inf.ethz.ch \\
      \addr Department of Computer Science\\
      ETH Zürich
      \AND
      \name Samuel Lanthaler \email samuel.lanthaler@univie.ac.at \\
      \addr Faculty of Mathematics\\
      University of Vienna
      }



\begin{document}

\maketitle

\begin{abstract}
We propose the \textit{State Space Neural Operator} (SS-NO), a compact architecture for learning solution operators of time-dependent partial differential equations (PDEs). Our formulation extends structured state space models (SSMs) to joint spatiotemporal modeling, introducing two key mechanisms: \textit{adaptive damping}, which stabilizes learning by localizing receptive fields, and \textit{learnable frequency modulation}, which enables data-driven spectral selection. These components provide a unified framework for capturing long-range dependencies with parameter efficiency. Theoretically, we establish connections between SSMs and neural operators, proving a universality theorem for convolutional architectures with a full field of view. Empirically, SS-NO achieves strong performance across diverse PDE benchmarks—including 1D Burgers' and Kuramoto–Sivashinsky equations, and 2D Navier–Stokes and compressible Euler flows—while using significantly fewer parameters than competing approaches. Our results demonstrate that state space modeling provides an effective foundation for efficient and accurate neural operator learning.

\end{abstract}

\section{Introduction}
Many problems in scientific computing require approximating nonlinear operators that map input functions to output functions, often governed by partial differential equations (PDEs). Neural operators (NOs) \citep{kovachki2023neural} provide a mesh-independent framework for this task, operating directly on functions and generalizing across discretizations.

The Fourier Neural Operator (FNO) \citep{lifourier} is a widely used NO, as its global convolution kernels effectively capture long-range spatial correlations. However, this fully global design comes with substantial computational and memory costs, scaling poorly in higher dimensions. Factorized variants \citep{tranfactorized,lehmann2023seismic,LEHMANN2024116718} address this by decomposing operator learning into lower-dimensional subspaces, achieving linear scaling in dimension—a crucial advantage for high-dimensional spatial applications. Yet, these approaches offer limited flexibility in adjusting receptive fields or frequency modes to different spatial regions or temporal dynamics.

In a complementary line of work, state space models (SSMs) \citep{gu2021efficiently}, such as the diagonalized S4D architecture \citep{gu2022parameterization}, have achieved remarkable success in modeling long-range \textit{temporal} dependencies. Like factorized FNOs, SSMs scale linearly in time and memory, but they provide an additional advantage: data-dependent parameterization of convolutional filters that can learn to emphasize different frequency components and temporal scales. Despite these parallels, the connection between FNOs (primarily spatial) and SSMs (primarily temporal) has remained underexplored.

This paper makes that connection explicit. We introduce the \textbf{State Space Neural Operator (SS-NO)}, a unified operator learning framework that applies SSMs directly over joint spatiotemporal domains. SS-NO can be interpreted in two complementary ways: as extending SSMs from purely temporal modeling to spatiotemporal operator learning, or as generalizing FNOs into more expressive architectures with adaptive receptive fields, frequency content, and temporal causality.

Theoretically, we prove that any continuous operator can be approximated arbitrarily well by convolutional NOs with full receptive fields, a condition satisfied by both FNO and SS-NO. Importantly, the spatial-only variant of SS-NO subsumes factorized FNO as a special case while adding two key capabilities: learned damping coefficients for stability control and data-driven frequency modulation.
Empirically, we validate SS-NO on challenging 1D and 2D benchmarks, including chaotic Kuramoto–Sivashinsky dynamics, variants of Kolmogorov flow, Richtmyer–Meshkov instability, and gravitational Rayleigh–Taylor turbulence. Across tasks, SS-NO surpasses existing methods using fewer parameters. These results highlight SS-NO as a principled, scalable, and practical architecture for data-driven modeling in engineering and the physical sciences.

\section{Related Work}
\subsection{Neural Operators}

Data-driven neural operators have emerged as a powerful framework for learning PDE solution operators \citep{chen1995universal,bhattacharya2021model,lu2021learning,kovachki2023neural}. Theoretical work established that several neural operators can universally approximate nonlinear operators \citep{chen1995universal,KLM_JMLR2020,kls2024hna,LLS2024}, including DeepONet \citep{lu2021learning,lanthaler2022error} and the Fourier Neural Operator (FNO), which implements global convolution via the Fourier transform \citep{lifourier}. Extensions such as the Factorized Fourier Neural Operator (FFNO) \citep{tranfactorized} enhance efficiency and expressivity in practice, but their theoretical expressivity remains unknown. Alternative inductive biases have also been explored, including U-Net–based convolutional operators \citep{raonic2023convolutional,guptatowards, rahmanu,takamoto2022pdebench} and attention-based operators like the Galerkin Transformer (GKT) \citep{cao2021choose} and the FactFormer \citep{li2023scalable}.

While some studies suggest that purely Markovian models can be performant for learning time-evolution PDEs \citep{tranfactorized,lippe2023pde}, recent work has shown that incorporating past states improves accuracy, particularly under low-resolution or noisy conditions. This has led to the integration of memory mechanisms into neural operators. \citet{buitragobenefits} introduce the Memory Neural Operator (MemNO) framework, which embeds temporal S4-based recurrence within general neural operator architectures, motivated by the Mori–Zwanzig formalism \citep{mori1965transport,zwanzig1961memory}. Our work extends this direction by examining scenarios with missing contextual information and introducing a spatiotemporal factorization that distributes memory across \emph{both time and space}.

\subsection{Structured State Space Models (SSMs)}
Structured state space models (SSMs) have recently achieved strong results on long-range sequence tasks in natural language processing and vision. The Structured State Space sequence model (S4) \citep{gu2021efficiently, gu2022parameterization} uses a continuous-time linear SSM layer to capture very long-range dependencies efficiently, and the Mamba model \citep{gumamba} extends this idea by making the SSM parameters input-dependent, further improving performance on large-scale language modeling tasks. These SSM-based architectures have been shown to match or exceed Transformer performance on a variety of benchmark tasks.

SSMs have also begun to be used in operator learning. For example, \citet{hu2024state} incorporate Mamba-style SSM layers to predict dynamical systems efficiently. In the context of neural PDE solvers, recent works like \citet{cheng2024mamba} and \citet{zheng2024alias} embed Mamba SSM modules to capture spatial correlations in solution fields. These methods primarily focus on spatial modeling, applying SSM layers across spatial dimensions or treating time steps sequentially. By contrast, our work focuses on applying SSMs jointly in both space and time, employing a unified spatiotemporal SSM architecture for neural operator learning.

\section{Methodology}

\subsection{Problem Formulation}

Let $\Omega \subset \mathbb{R}^d$ be a bounded spatial domain, and consider the solution $u \in C([0, T]; L^2(\Omega; \mathbb{R}^V))$ of a time-dependent partial differential equation (PDE), where $C$ denotes the space of continuous functions and $L^2$ denotes the space of square-integrable functions. We assume access to a dataset of $N$ solution trajectories ${u^{(i)}(t, x)}_{i=1}^N$ generated from varying initial or parametric conditions.

In practice, we discretize the spatial domain $\Omega$ using an equispaced grid $\mathcal{S}$ of resolution $f$, and the temporal domain $[0, T]$ using an equispaced grid $\mathcal{T}$ with $N_t + 1$ points. Let $T_\text{in} < T$ denote the fixed input horizon.

Given the trajectory segment $u(t, x)|_{t \in [0, T_\text{in}]}$ on $\mathcal{S}$, our goal is to predict the future evolution $u(t, x)|_{t \in [T_\text{in}, T]}$ on the same grid. Rather than forecasting a single step, we aim to learn the full system dynamics conditioned on the initial segment. Formally, the NO model defines a parametric map $\Psi_\theta: C(\mathcal{T}_\text{in}; L^2(\Omega; \mathbb{R}^V)) \to C(\mathcal{T}_\text{out}; L^2(\Omega; \mathbb{R}^V))$, where $\mathcal{T}_\text{in} = [0, T_\text{in}]$ and $\mathcal{T}_\text{out} = [T_\text{in}, T]$, mapping the input segment to the predicted solution.

\subsection{Structured State Space Models (S4 and S4D)}

We begin with the continuous-time linear state space model (SSM) defined as:
\begin{equation}
\label{eq:ssm}
\begin{aligned}
\frac{d}{dt} v(t) &= A v(t) + B u(t), \\
y(t) &= C v(t) + D u(t),
\end{aligned}
\end{equation}
where $v(t) \in \mathbb{R}^H$ is the hidden state, $H$ is the number of states, $u(t) \in \mathbb{R}$ is the input, and $y(t) \in \mathbb{R}$ is the output. The matrices $A \in \mathbb{R}^{H \times H}$, $B \in \mathbb{R}^{H \times 1}$, $C \in \mathbb{R}^{1 \times H}$, and $D \in \mathbb{R}$ define the system dynamics.

The Structured State Space (S4) model \citep{gu2021efficiently} introduces a parameterization where $A$ is structured to allow efficient computation of the convolution kernel $\kappa(t)$ corresponding to the system's impulse response. Specifically, S4 utilizes a diagonal plus low-rank (DPLR) structure, $A = \Lambda + P Q^\top$, where $\Lambda \in \mathbb{C}^{H \times H}$ is diagonal, and $P, Q \in \mathbb{C}^{H \times r}$ with $r \ll H$. This structure enables computation of $\kappa(t)$ via the Cauchy kernel and allows for efficient implementation with the Fast Fourier Transform (FFT).

To further simplify the model, S4D \citep{gu2022parameterization} restricts $A$ to be diagonal, i.e., $A = \text{diag}(\lambda_1, \ldots, \lambda_N)$, and initializes the eigenvalues $\{\lambda_i\}$ to approximate the behavior of the original S4 model. This diagonalization reduces the computational complexity and memory footprint while retaining the ability to model long-range dependencies. 

In both S4 and S4D, after computing the convolution with the kernel $\kappa(t)$, a pointwise nonlinearity $\sigma(\cdot)$ (e.g., GELU) is applied, followed by a residual connection. 

\subsection{Markovian Spatial State Space Model}
\label{sec:spatial_ssm}
To model spatial dependencies in PDEs, we extend the S4D framework to spatial dimensions. A key requirement for universality of factorized convolutional neural operators is a full field of view, as formalized in Theorem~\ref{thm:main}. A unidirectional SSM does not satisfy this condition because each spatial location only aggregates information from one side, limiting expressivity. Empirically, we confirm in Appendix~\ref{app:unidirectional} that unidirectional scans underperform, motivating the use of bidirectional processing.

For a 1D spatial domain discretized into $X$ points, we define two SSM models: $\mathcal{M}_{fwd}$ and $\mathcal{M}_{bwd}$, where the $_{fwd}$ and $_{bwd}$ labels serve as identifiers rather than indicating scan direction. The processing is computed as
\begin{equation}
y_{fwd} = \mathcal{M}_{fwd}(u),
\qquad
y_{bwd} = \text{flip}\left( \mathcal{M}_{bwd}( \text{flip}(u) ) \right),
\end{equation}
with the final output $y = y_{fwd} + y_{bwd}$. For multi-dimensional grids, such as $X \times Y$, bidirectional SSMs are applied sequentially along each axis (first $x$, then $y$), ensuring every spatial location has a full field of view and satisfying Theorem~\ref{thm:main}.

\textbf{Comparison with Existing Methods.} Unlike Vision Mamba \citep{visionmamba2024}, which processes 2D data in a zigzag manner using bidirectional SSMs, our approach applies SSMs separately along each spatial dimension. While Factorized Fourier Neural Operators (F-FNO) \citep{tranfactorized} process each spatial dimension in parallel and sum the results, our method applies SSMs sequentially, allowing more expressive modeling of spatial interactions. Due to memory constraints, we do not employ Mamba-based SSMs directly but explore alternative 2D and factorized spatial SSM configurations using S4D in Appendix~\ref{app:factorized}.

\label{app:architecture}
\begin{figure}[h]
    \centering
    \includegraphics[scale=1.0]{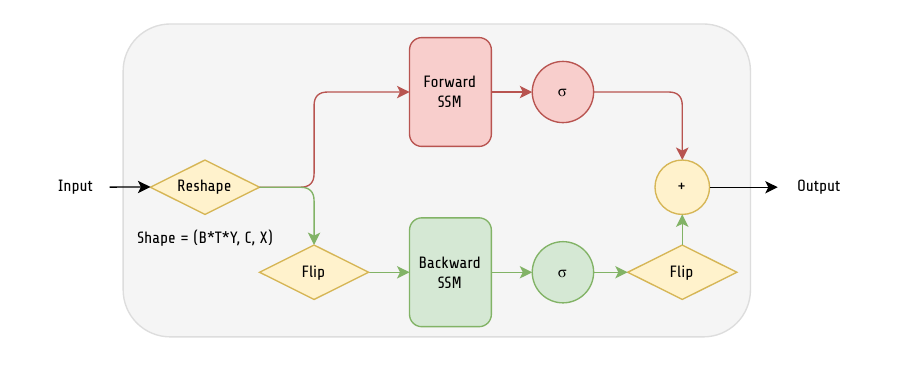}
    \caption{Detailed illustration of the spatial bidirectional SSM module. $B$: batch size, $T$: temporal length, $X$: spatial dimensions, $C$: input channels, $\sigma$: pointwise nonlinearity, and $+$: element-wise addition. The input is processed through both a forward spatial SSM and a flipped backward spatial SSM. Each path includes a residual connection and nonlinear activation, and their outputs are aggregated to form the final output.}
    \label{fig:module}
\end{figure}

\begin{figure}[h]
\centering
\includegraphics[width=\textwidth]{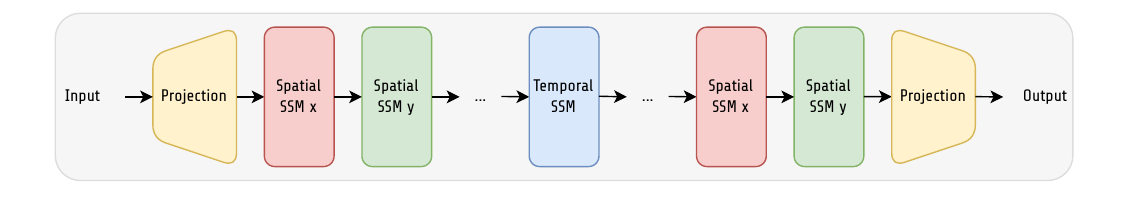}
\caption{Architecture combining Markovian 1D spatial SSM modules with a single temporal SSM following the MemNO framework. The spatial SSMs are applied sequentially across spatial dimensions, while the temporal SSM's position within the stack is a tunable hyperparameter.}
\label{fig:model}
\end{figure}

\subsection{Full SS-NO Model with Temporal Memory}

The SS-NO architecture combines Markovian spatial SSM layers (Section~\ref{sec:spatial_ssm}) with a single non-Markovian temporal memory module following the MemNO framework to capture spatiotemporal dependencies in PDEs. Spatial layers process local interactions sequentially across dimensions, while the temporal module aggregates information across previous timesteps. During inference, the temporal module maintains and updates a hidden state at each step, which is combined with spatial processing for next-step predictions.

The number of spatial layers and the placement of the temporal layer within the network stack are tunable hyperparameters optimized for specific PDE characteristics. Figures~\ref{fig:module} and \ref{fig:model} illustrate the architecture: spatial SSM modules perform bidirectional processing with residual connections, while the full model integrates these with the temporal module. Complete implementation details are provided in Appendix \ref{app:code}.

\section{Theory}

We view the proposed SS-NO architecture as an instance of a popular neural operator paradigm \cite{kovachki2023neural}, combining two types of layers: (a) pointwise composition with nonlinear activations, and (b) application of nonlocal convolution operators, and with prototypical hidden layers of the form,

\vspace{-2em}
\begin{align}
\label{eq:layer}
\cL: 
\; 
v(x) \mapsto \underbrace{
\sigma(
Wv(x) + b
}_{\text{(a) \text{nonlinear}}}
+ 
\underbrace{
\textstyle\int_D \kappa(x-y) v(y) \, dy 
}_{\text{(b) nonlocal}}
),
\end{align}
\vspace{-1.5em}

where $v: D \to \R^H$ is a hidden state, $\sigma$ is a standard activation function (e.g., GELU), $W \in \R^{H\times H}$ is a weight matrix, $b \in \R^{H}$ is a bias vector and $\kappa: D \to \R^{H\times H}$ is a learnable integral-kernel. 

To give a unified discussion, we will say that $\Psi$ is a \textbf{convolutional NO}, if it is of the form $
\Psi(u) = \cQ \circ \cL_L \circ \dots \circ \cL_1 \circ \cR(u)$,
with hidden layers $\cL_\ell$ as in \eqref{eq:layer}, and a choice of parametrized kernel $\kappa_\ell(x) = \kappa_\ell(x;\theta)$. In addition, we have a lifting layer $\cR(u)(x) := R(u(x),x)$ and projection layer $\cQ(v(x)) = Q(v(x))$ by composition with shallow neural networks $R, Q$. We next discuss three architectures exemplifying this approach, highlighting their commonality and giving a unified description. This results in a sharp \emph{criterion for universality} for any such convolutional NO architecture. 

\textbf{Fourier neural operator (FNO).}
The FNO parametrizes the convolution kernel by a truncated Fourier series, $\kappa(x) = \sum_{|k|_\infty \le K} \hat{\kappa}_k e^{ikx}$, with cut-off parameter $K$, and $|k|_\infty = \max_{j=1,\dots, d} |k_j|$. The Fourier coefficients $\hat{\kappa}_k \in \C^{H\times H}$ are complex-valued matrices. 
The parametrization of the integral kernel $\kappa(x)$ of FNO in $d$ dimensions entails a considerable memory footprint, requiring $O(LH^2 K^d)$ parameters. This can render FNO prohibitive in high-dimensional applications.

\textbf{Factorized FNO (F-FNO).}
To lessen the computational demands of FNO, so-called ``factorized'' architectures have been introduced. Here, convolutions are taken along one dimension at a time. Mathematically, this corresponds to choosing kernels of the form $\kappa(x) = \kappa_s(x_{j}) \prod_{k\ne j} \delta(x_k)$, where $\kappa_s(x_j)$ is a 1d FNO kernel and $\delta(x_k)$ the Dirac delta function. Thus, integration is effectively only performed with respect to $x_j$. The resulting F-FNO only requires $O(LH^2 K d)$ parameters \cite{tranfactorized}.

\textbf{Spatial SSM.} 
We next consider the spatial convolution layers of the proposed SS-NO architecture. In this case, solution of the ODE system \eqref{eq:ssm} leads to an explicit formula for the corresponding kernel. For a 1d spatial domain, this results in a kernel $\kappa(x) = \kappa_{+}(x) + \kappa_{-}(x)$, where $\kappa_{\pm}(x)$ correspond to the backward and forward scans, respectively, and $\kappa_{\pm}(x)$ is of the form
\begin{align}
\label{eq:spatial-ssm}
\kappa_{\pm}(x) = 1_{\R_{\pm}}(x) \; \sum_{k=1}^K e^{r_k |x|} e^{i \omega_k x} \, C_k
B_k^T,
\;
r_k = \text{Real}(\lambda_k), \; \omega_k = \text{Imag}(\lambda_{k}), \; B_k,C_k \in \R^{H}.
\end{align}
Here $1_{\R_{\pm}}(x)$ is the indicator function of $\R_{\pm} = \{\pm x \ge 0\}$.
The main difference with FNO lies in the parametrization of the convolutional kernel $\kappa$, which now has tunable frequency parameters $\omega_k\in \R$, and tunable damping parameters $r_k < 0$. The parameter count of the resulting factorized spatial SSM architecture in $d$-dimensions requires at most $O(LH(H+K)d)$ parameters.

A comparison of the model parameter count entailed by the above choices is summarized below:

\begin{table}[ht!]
\center
\begin{tabular}{l|ccc}
Model & \textbf{FNO} & \textbf{F-FNO} & \textbf{spatial-SSM} \\ \hline
\# Parameter (scaling) & $LH^2K^d$ & $LH^2Kd$ & $L(H^2+HK)d$
\end{tabular}
\end{table}
\vspace{-1em}

\subsection{A Sharp Criterion for Universality of Convolutional NOs.}
As highlighted above, FNO, factorized FNO and SSMs all share a common structure, distinguished by the chosen kernel parametrization. \emph{What can be said about the expressivity of such architectures?} Our first goal is to derive a sharp, general condition for the universality of such convolutional NO architectures.

A minimal requirement for the universality of neural network architectures is that the value of each output pixel must be informed by the values of \emph{all} input pixels, i.e., the model needs to have a ``full field of view''. For convolutional NO architectures, this leads to the following definition \sam{(cf. discussion in App. \ref{app:intuitive})}:
\begin{definition}
A convolutional NO architecture with layer kernels $\kappa_1,\dots, \kappa_L$ has a \textbf{full field of view}, if the iterated kernel $\bar{\kappa} := \kappa_L \ast \kappa_{L-1}\ast \dots \ast \kappa_1$ is non-vanishing, i.e. $\bar{\kappa}(x-y) \ne 0$ for all $x,y\in D$. 
\end{definition}
The next result shows that this ``minimal'' condition is actually already sufficient for the universality of a convolutional architecture, requiring no additional assumptions on the convolution operators:

\begin{theorem}
\label{thm:main}
A (factorized) convolutional NO architecture is universal if it has a full field of view.
\end{theorem}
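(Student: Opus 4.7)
The plan is to reduce universality to a standard encode--process--decode pipeline, with the full-field-of-view assumption certifying that the encode and decode stages can be realized by the convolutional layers.

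First I would approximate a target continuous operator $\mathcal{G}:K\to C(D;\R^{V'})$ on a compact $K\subset C(D;\R^V)$ by a finite-rank surrogate
\[
\mathcal{G}_{\epsilon}(u)(x)=\sum_{j=1}^{m} F_j\bigl(\Lambda_1(u),\dots,\Lambda_m(u)\bigr)\,\psi_j(x),
\]
with linear functionals $\Lambda_i:C(D;\R^V)\to\R$, continuous $F_j:\R^m\to\R$, and basis functions $\psi_j$ on $D$. This reduction is standard and combines a finite-dimensional projection for compact sets in $C(D;\R^V)$ (e.g.\ truncated Fourier) with the uniform continuity of $\mathcal{G}$.

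Next I would realize $\mathcal{G}_\epsilon$ inside the convolutional NO. The lifting $\mathcal{R}$ widens the channel dimension and injects the positional coordinate $x$ as an auxiliary channel. The first convolutional layers act as an encoder, producing---up to a known position-dependent modulation---the features $\Lambda_i(u)$ as channel values at every $x\in D$. The pointwise MLP pieces $\sigma(W\slot+b)$ inside subsequent layers approximate the continuous $F_j$ via classical universal approximation of feedforward networks in the channel direction. A final convolutional block, followed by the projection $\mathcal{Q}$, distributes the processed values according to the basis functions $\psi_j$ to form the output. The full-field-of-view hypothesis is what makes both the encoder and decoder realizable: it guarantees that information from every input pixel can reach every output pixel through the cascade of layers.

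The main obstacle is the encoder/decoder construction, since a single convolution $(\kappa\ast u)(x)$ is intrinsically translation-covariant and cannot directly produce spatially-constant features. I would handle this by (i) using the non-vanishing of the iterated kernel $\bar\kappa$ as a deconvolution-style certificate that the cascade acts invertibly against a dense class of smooth test kernels, and (ii) combining the convolutions with the positional channels from $\mathcal{R}$ and with pointwise nonlinearities, so that the $x$-modulation introduced by each individual convolution can be absorbed downstream. For the factorized variant an additional tensorization argument is required, showing that axis-separated kernels still compose to a non-vanishing iterated kernel, so that the same encode--process--decode scheme applies after reordering the axis-wise layers. Concrete realizations of this plan are already visible in the three parametrizations of interest---FNO, F-FNO, and spatial SSM---each of which admits an explicit choice of parameters yielding a non-vanishing $\bar\kappa$.
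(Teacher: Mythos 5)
Your encode--process--decode reduction does not go through under the stated hypothesis, and the paper takes a structurally different route precisely to avoid this. The finite-rank surrogate $\sum_j F_j(\Lambda_1(u),\dots,\Lambda_m(u))\psi_j(x)$ requires the network to produce the \emph{same} functionals $\Lambda_i(u)$ as features at every location $x$, but a cascade of convolutions with iterated kernel $\bar{\kappa}$ computes, at location $x$, the functional $u\mapsto \int_D \bar{\kappa}(x-y)\,\phi(y)u(y)\,dy$ --- a genuinely different functional for each $x$ unless $\bar{\kappa}$ is constant (or separable), which ``full field of view'' does not provide. This is not a ``position-dependent modulation'' that pointwise layers can absorb downstream: the weighting of $u$ inside the integral changes with $x$, not merely a scalar prefactor outside it. Your fallback, reading the non-vanishing of $\bar{\kappa}$ as a deconvolution certificate, conflates pointwise non-vanishing of the kernel with invertibility of the convolution operator; these are unrelated. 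On the torus, $\bar{\kappa}(x) = 1 + \tfrac12\cos(x)$ is nowhere zero, yet convolution with it annihilates every Fourier mode $|k|\ge 2$, so no deconvolution is possible and no dense class of functionals can be recovered this way.

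The paper's proof avoids needing any encoder at all. It identifies operators with elements of $C(\cK\times D)$ and applies Stone--Weierstrass: the realizable operators form an approximate subalgebra (sums by parallelization, products by approximating $(v_1,v_2)\mapsto \cQ_1(v_1)\cQ_2(v_2)$ with the final projection network), so the only nontrivial obligation is \emph{point separation}. Separating $(u_1,x)$ from $(u_2,x)$ requires only a single nonzero value $\int_D\bar{g}(x-y)\rho_\delta(y-y_0)\,dy \ne 0$ for one well-chosen bump $\rho_\delta$ localized where $u_1$ and $u_2$ differ --- and pointwise non-vanishing of $\bar{g}$ is exactly (and only) strong enough for that. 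If you want to rescue the Chen--Chen-style pipeline, you would need a strictly stronger hypothesis on the kernel class (e.g.\ that a constant kernel, or a kernel family dense in $C(D)$ after pointwise pre-multiplication, is realizable), which would no longer be the sharp criterion the theorem claims.
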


We refer to Appendix \ref{app:universality} for the precise statement and proof. To the best of our knowledge, the criterion identified above is both the simplest and most widely applicable result for convolutional NO architectures; it implies universality of (vanilla) FNO, factorized FNOs and combinations of (F-)FNO and SSMs, SS-NO and even gives a sharp criterion for the universality of \emph{localized} convolutional NOs (similar to the CNO \cite{raonic2023convolutional}), for which $\kappa_\ell$ has localized support. A theoretical basis for both factorized and localized architectures had been missing from the literature. The above result closes this gap. Appendix \ref{app:unidirectional} contrasts empirical results with a unidirectional SSM that violates the criterion.

\subsection{Adaptivity and Enhanced Model Expressivity of SS-NO}

Based on the formulation in \eqref{eq:spatial-ssm}, the factorized spatial SSM structurally subsumes the FNO (in 1D) and F-FNO architectures. We provide a rigorous proof of this capability in Appendix~\ref{sec:recover}. Fundamentally, the SSM parameterizes a continuous 1D convolutional kernel of the form
\begin{align}
\label{eq:spatial-ssm-kernel}
\kappa(x) = \sum_{k=1}^K c_k e^{-\rho_k |x|} e^{i\omega_k x},
\quad
\rho_k \ge 0, \quad \omega_k \in \R.
\end{align}
This formulation generalizes the standard Fourier basis, allowing the model to learn adaptive frequencies and damping rates beyond the fixed harmonics of the DFT. However, since the damping and frequency are \emph{tunable parameters} within SS-NO, this adds further adaptivity: (1) choice of $\rho_k > 0$ allows effective \emph{kernel localization}; the model can optimize the support of its convolutional kernel, interpolating between the global kernels of FNO ($\rho_k \approx 0$) and very localized CNN-like kernels ($\rho_k \gg 1$). (2) additional adaptivity comes from \emph{adaptive mode-filtering}; the model can optimize the frequencies $\omega_1, \dots, \omega_K$, most relevant for nonlocal processing. Thus, in theory, this added adaptivity implies enhanced model expressivity of SS-NO over F-FNO.

\section{Experiment Setup}

\subsection{Setup and Dataset Description}

We evaluate our models on a suite of one-dimensional (1D) and two-dimensional (2D) partial differential equation (PDE) benchmarks commonly used in operator learning. These datasets span a variety of dynamical regimes---from chaotic behavior to turbulent flows---and are designed to assess the models' ability to capture long-range temporal dependencies. For 1D problems, we use datasets based on the Burgers' equation (nonlinear convection--diffusion) and the Kuramoto--Sivashinsky equation (chaotic dynamics), following the same data sources as \citet{buitragobenefits} and generating low-resolution versions through uniform spatial downsampling.

For 2D problems, we evaluate several widely used benchmarks, including the \texttt{TorusLi}, \texttt{TorusVis}, and \texttt{TorusVisForce} datasets \citep{lifourier,tranfactorized}, as well as compressible Euler benchmarks such as the Richtmeyer--Meshkov instability (CE-RM) and the Rayleigh--Taylor instability with gravitational forcing (GCE-RT) \citep{herde2024poseidon}. The \texttt{TorusVis} and \texttt{TorusVisForce} datasets incorporate variable, time-dependent forcing under randomly sampled viscosities in the range $\nu \in [10^{-5}, 10^{-4})$. Together, these benchmarks test the models' ability to capture turbulence, long-range interactions, shocks, chaotic dynamics, and interface instabilities.

For evaluation, we compare SS-NO against a suite of popular baseline models including U-Net \citep{guptatowards}, GKT \citep{cao2021choose}, Factformer \citep{li2023scalable}, and FFNO \citep{tranfactorized,buitragobenefits} for 1D benchmarks, and additionally FNO with a two-dimensional kernel (2D FNO) \citep{lifourier} for 2D problems. Following the MemNO framework \citep{buitragobenefits}, we augment all baselines and SS-NO with a temporal S4 module using a memory window of $K=4$ for fair comparison. The only exception is GKT, for which we use a multi-input variant with $K=4$ where the temporal dimension is mixed with features, as we found the model struggled to converge with an additional temporal S4 module.

\paragraph{Data Preprocessing and Training Setup.}
We follow standard practices from prior work \citep{tranfactorized}, normalizing all data to the range $[0,1]$ and adding Gaussian noise with variance $10^{-3}$ during training for stability. Models are trained to minimize the step-wise normalized relative $\ell_2$ loss.

Unless otherwise specified, all models use four blocks with consistent hidden dimensions. Full details on baseline models, data generation, preprocessing, and training are provided in Appendices~\ref{app:training} and~\ref{app:data}.

\subsection{Training Objective.}
We train the model by minimizing the empirical risk over the dataset of trajectories. Given a loss function $\ell : L^2(\Omega;\R^V) \times L^2(\Omega;\R^V) \rightarrow \mathbb{R}$, we solve:
\begin{equation}
\label{eq:formulation}
\theta^* = \arg\min_{\theta} \frac{1}{N} \sum_{i=1}^{N} \frac{1}{N_\text{out}} \sum_{t \in \mathcal{T}_\text{out}} \ell\left(u^{(i)}(t, x),\; \Psi_{\theta,t}\left(u^{(i)}(t', x)|_{t' \in \mathcal{T}_\text{in}}\right)\right),
\end{equation}
where $\Psi_{\theta,t}(\cdot)$ denotes the prediction at time $t \in \mathcal{T}_\text{out}$. As our loss $\ell$, we employ the relative $\cL^2$-error, discussed below. The formulation in \eqref{eq:formulation} enables the model to learn the entire future evolution of the system dynamics from a finite observed window, rather than stepwise or autoregressive forecasting.

\subsection{Evaluation Metric.}
Performance is reported using the relative $\ell_2$ error:
\begin{equation}
\textit{Relative } \ell_2(u(t, x), \hat{u}(t,x)) = \frac{\|u(t, x) - \hat{u}(t,x)\|_2}{\|u(t, x)\|_2},
\end{equation}
where $\| \cdot \|_2$ denotes the squared norm over all spatial locations and time steps in the output horizon.  

All results reported in the main text represent the mean performance, measured by the \textbf{best validation loss}, across \textbf{five independent runs} with different random seeds. To contextualize performance relative to model complexity, we also report the \textbf{number of parameters} for each model, defined as the total count of learnable parameters. We note that for all 1D experiments, the standard deviation is less than $\mathbf{\pm5\%}$ of the reported mean at resolutions 128 and 64, and less than $\mathbf{\pm7\%}$ at resolution 32, while for all 2D experiments, the standard deviation is below $\mathbf{\pm0.3\%}$ in relative $\ell_2$ error. The exceptions to this are GKT and Factformer (2D), which exhibit slightly higher variability.

\begin{figure*}[t]
    \centering
    \includegraphics[width=\textwidth]{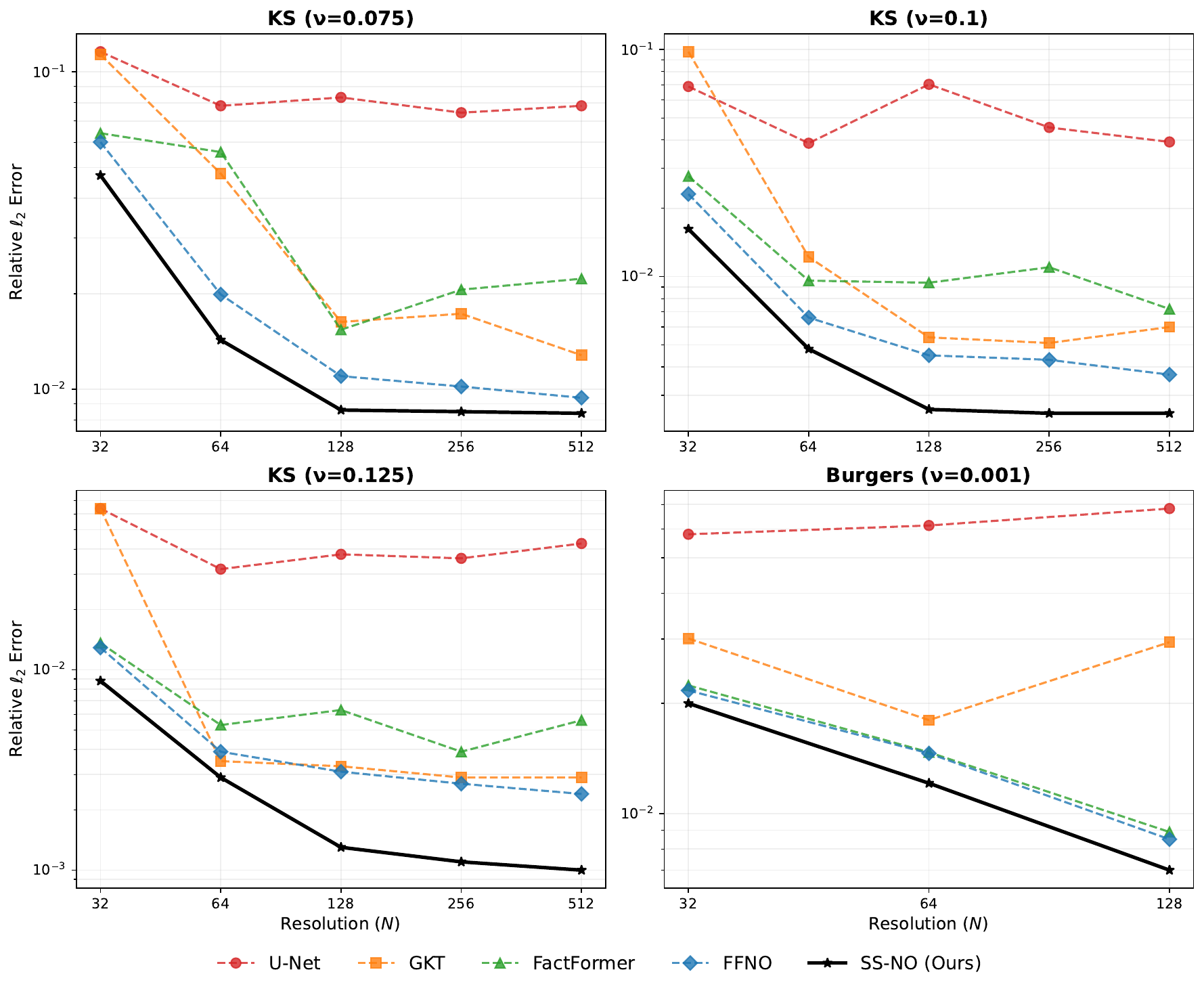}
    \caption{\textbf{Resolution Dependence Analysis.} Relative $\ell_2$ error of SS-NO and baseline models on 1D benchmarks across varying spatial resolutions ($N \in \{32, \dots, 512\}$). \textbf{Left three panels:} Kuramoto-Sivashinsky (KS) equation with increasing viscosity coefficients $\nu \in \{0.075, 0.1, 0.125\}$. \textbf{Right panel:} Burgers' equation ($\nu=0.001$), limited to $N=128$ due to dataset constraints. While U-Net performance stagnates at higher resolutions ($N \ge 128$), operator learning methods generally improve. Notably, \textbf{SS-NO (Ours)} achieves superior accuracy early at $N=128$ and maintains the lowest error floor at $N=512$, demonstrating robust resolution efficiency.}
    \label{fig:resolution_scaling}
\end{figure*}

\vspace{-0.1 in}

\section{Results}
\subsection{1D Burgers' Equation}

We evaluate model performance on the canonical 1D Burgers' equation with $\nu = 0.001$ across temporal resolutions of $128$, $64$, and $32$. As shown in Figure~\ref{fig:resolution_scaling}, SS-NO achieves competitive accuracy at all resolutions while demonstrating exceptional parameter efficiency.

Notably, SS-NO delivers superior performance across all tested regimes. Unlike Fourier-based models whose complexity often scales with input resolution and the number of spectral modes, SS-NO maintains a consistent architectural footprint independent of the input grid size. This structural advantage is particularly valuable for applications requiring seamless deployment across multiple spatial or temporal scales.

The Burgers' equation exhibits smooth dynamics dominated by diffusion and mild nonlinearity, and SS-NO's consistent performance across resolutions highlights its robustness in handling such well-behaved systems. The model's ability to maintain accuracy while drastically reducing parameter count represents a significant advancement in efficient operator learning.

\subsection{1D Kuramoto–Sivashinsky Equation}

We next evaluate models on the 1D Kuramoto–Sivashinsky (KS) equation across three viscosities ($\nu = 0.075, 0.1, 0.125$). The KS equation presents a challenging benchmark characterized by increasingly chaotic dynamics and multiscale spatiotemporal features as viscosity decreases.

\textbf{Performance Analysis.} As shown in Figure~\ref{fig:resolution_scaling}, SS-NO demonstrates consistent performance improvements, achieving the best results among evaluated methods across all viscosity regimes. At the standard resolution of $N=128$, SS-NO outperforms the second-best model (FFNO) by $22\%$ at $\nu=0.075$, $42\%$ at $\nu=0.1$, and $58\%$ at $\nu=0.125$. These improvements are particularly pronounced in the low-viscosity regime ($\nu=0.075$), where chaotic behavior dominates and traditional methods often struggle to track long-term dependencies.

\textbf{Resolution Dependence and Spectral Efficiency.} To analyze the asymptotic behavior of the architectures, we extended our evaluation to higher spatial resolutions of $N=256$ and $N=512$. In general, we observe diminishing returns in performance improvement for most architectures beyond $N=128$, suggesting that the macroscopic dynamics of the system are largely resolved at this scale. However, a distinct trend emerges when comparing spectral and state-space approaches:

\begin{enumerate}
    \item \textbf{Explicit Spectral Scaling (FFNO):} The FFNO demonstrates a slight but continuous reduction in error at higher resolutions. We attribute this to the explicit nature of the Fast Fourier Transform; as the grid resolution increases, the Nyquist frequency rises, allowing the FFNO to capture high-frequency details that were previously truncated.
    
    \item \textbf{Implicit Adaptive Learning (SS-NO):} In contrast, SS-NO achieves its optimal performance floor earlier, effectively converging at $N=128$, and maintains this superior accuracy at $N=512$ even with a fixed state dimension. This stability supports our hypothesis regarding the \textit{adaptive frequency learning} capabilities of State-Space Models. Unlike FFNO, which requires finer spatial discretization to resolve high-frequency features via FFT, SS-NO’s recurrent mechanism appears capable of extracting and encoding these critical dynamics efficiently within its latent state space.
\end{enumerate}

This efficiency is further evidenced at coarse resolutions. At $N=32$, SS-NO maintains a $21\%$ improvement over FFNO at $\nu=0.075$ and $30\%$ at $\nu=0.1$. These results collectively demonstrate that SS-NO achieves an optimal balance of accuracy and robustness—excelling in chaotic regimes while maintaining parameter efficiency across varying resolutions.

\subsection{Ablation Study}
\label{sec:ablation}

We conduct a comprehensive ablation study to understand the individual contributions of model capacity, learnable frequencies, and explicit damping mechanisms. Four distinct configurations are evaluated: \textbf{All}, where both frequencies and damping are learnable; \textbf{Damping only}, where frequencies remain fixed while damping is learnable; \textbf{Frequency only}, where frequencies are learnable but damping is explicitly set to zero (absent); and \textbf{Fixed}, where neither frequencies nor damping are learnable.

Experiments are conducted on three variants of the \texttt{KS} equation with viscosities $\nu \in \{0.075, 0.1, 0.125\}$, all at resolution $128$. Performance is quantified using relative $\ell_2$ error. Results are presented in Table~\ref{tab:ks_states} to isolate the effects of architectural components across different capacity regimes.

\begin{table}[h]
\centering
\caption{Relative $\ell_2$ error on KS benchmarks at varying state sizes and training settings.}
\label{tab:ks_states}
\small
\begin{tabular}{c p{2.2cm} r r r r}
\toprule
\textbf{State Size} & \textbf{Setting} & \textbf{\# Parameters} & \multicolumn{3}{c}{\textbf{Relative $\ell_2$ Error}} \\
\cmidrule[\heavyrulewidth](lr){4-6}
& & & \multicolumn{3}{c}{\texttt{KS}} \\
\cmidrule(lr){4-6}
& & & $\nu = 0.075$ & $\nu = 0.1$ & $\nu = 0.125$ \\
\midrule

\multirow{4}{*}{64} 
& All              & 203,713 & 0.0086 & 0.0026 & 0.0013 \\
& Damping only     & 187,329 & 0.0086 (+0.0000) & 0.0027 (+0.0001) & 0.0013 (+0.0000) \\
& Frequency only   & 187,329 & 0.0110 (+0.0024) & 0.0030 (+0.0003) & 0.0014 (+0.0001) \\
& Fixed            & 170,945 & 0.0116 (+0.0030) & 0.0033 (+0.0006) & 0.0016 (+0.0003) \\
\midrule

\multirow{4}{*}{32} 
& All              & 154,561 & 0.0090 & 0.0027 & 0.0012 \\
& Damping only     & 146,369 & 0.0094 (+0.0004) & 0.0028 (+0.0001) & 0.0014 (+0.0002) \\
& Frequency only   & 146,369 & 0.0137 (+0.0047) & 0.0038 (+0.0011) & 0.0018 (+0.0006) \\
& Fixed            &138,177 & 0.0158 (+0.0068) & 0.0042 (+0.0015) & 0.0020 (+0.0008) \\
\midrule

\multirow{4}{*}{16} 
& All              & 129,985 & 0.0115 & 0.0038 & 0.0017 \\
& Damping only     & 125,889 & 0.0131 (+0.0016) & 0.0044 (+0.0006) & 0.0019 (+0.0002) \\
& Frequency only   & 125,889 & 0.0198 (+0.0083) & 0.0069 (+0.0031) & 0.0031 (+0.0014) \\
& Fixed            & 121,793 & 0.0227 (+0.0112) & 0.0068 (+0.0030) & 0.0031 (+0.0014) \\
\bottomrule
\end{tabular}
\end{table}

\subsubsection{Observations}

\textbf{Explicit damping enables remarkable parameter efficiency.} The most striking finding is that proper damping mechanisms allow models to maintain strong performance even under extreme capacity constraints. The 16-state damping-only model achieves performance competitive with much larger models, demonstrating that damping serves as a powerful regularization and stabilization mechanism that dramatically improves parameter efficiency. This efficiency is particularly evident in challenging regimes ($\nu=0.075$), where the 16-state damping-only model approaches the performance of fixed-configuration models with 2-4× more states, despite its significantly reduced capacity.

\textbf{Model capacity dramatically affects damping requirements.} Higher-capacity models can implicitly compensate for architectural limitations through additional spectral modes, while lower-capacity models rely critically on explicit damping for stability. The performance degradation in 16-state models without damping versus 64-state models confirms that damping becomes increasingly essential as model capacity decreases. This progressive dependency relationship underscores damping's role as a crucial stabilization mechanism in capacity-constrained settings.

\textbf{Damping significance escalates with problem difficulty.} The performance gap between models with and without damping grows substantially with increasing chaos (decreasing viscosity). For $\nu=0.075$, proper damping provides significant absolute improvement, while for $\nu=0.125$, the benefit reduces substantially. This differential effect confirms that chaotic dynamics benefit substantially from explicit damping mechanisms, while smoother regimes can tolerate their absence.

\textbf{Learnable frequencies provide consistent spectral adaptation benefits.} Across all configurations, the frequency-only setting consistently outperforms the fixed configuration, demonstrating the value of data-driven spectral basis adaptation. This benefit is particularly pronounced in constrained settings and complex regimes like KS $\nu=0.075$, where learnable frequencies allow models to dynamically allocate their limited spectral resources to the most relevant temporal modes. Rather than being constrained to a fixed Fourier basis, models can adapt their frequency representations to focus on the specific oscillatory patterns most critical for the target dynamics, leading to more efficient temporal representation learning. While these gains are generally more modest than those provided by explicit damping—particularly in challenging regimes—they confirm that learnable frequencies provide meaningful improvements in spectral efficiency.

\textbf{Results are not driven by overparameterization.} The consistent performance pattern across capacity levels confirms that our findings are not artifacts of excessive model size. Even the smallest models (16 states) achieve competitive performance when equipped with appropriate inductive biases, demonstrating that architectural design rather than parameter count drives performance improvements.

\subsubsection{Damping Analysis: Connecting Ablation Results to Learned Behavior}

The ablation results naturally raise the question: how do these performance differences manifest in the actual learned damping behavior? To answer this, we conduct a detailed analysis of the learned damping distributions across different configurations, aggregating results from five independently trained models with different random seeds for each case to ensure statistical robustness.

\begin{figure}[h!]
\centering
\includegraphics[width=1\textwidth]{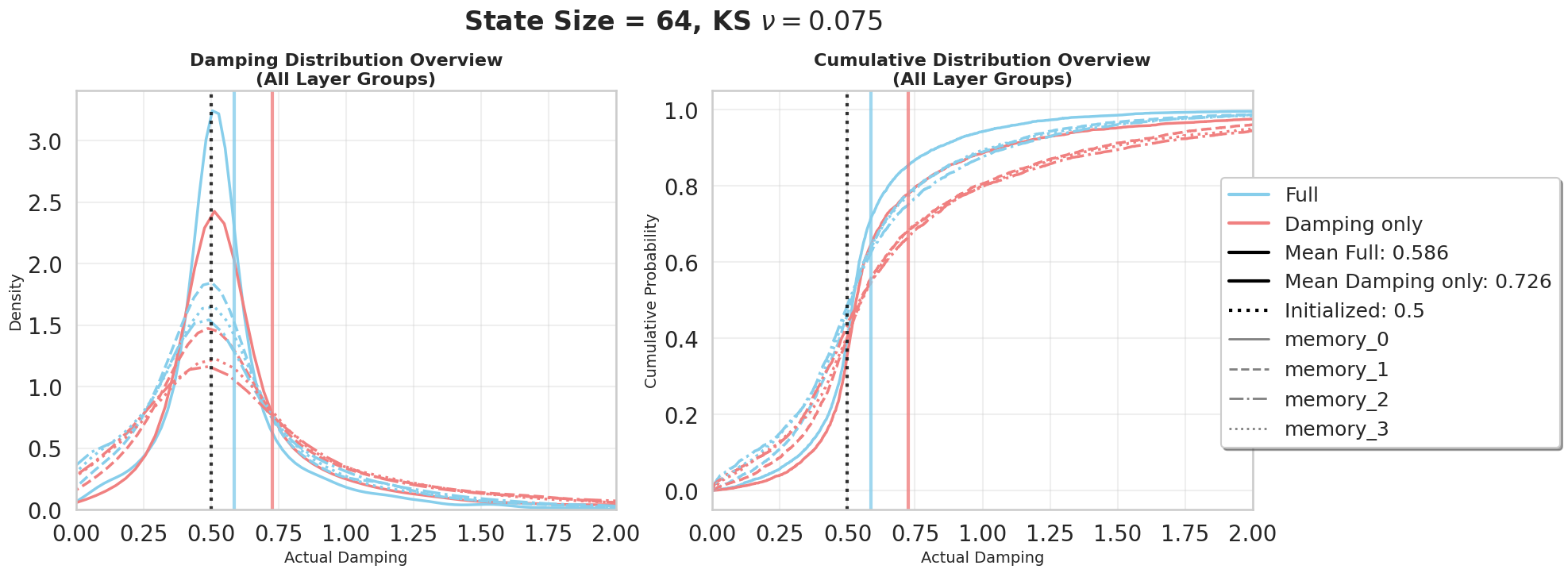}
\caption{Damping coefficient distributions for full vs.\ damping-only models at $\nu = 0.075$ with 64 states. }
\label{fig:ablation1}
\end{figure}

\textbf{Architectural constraints drive stronger damping.} Quantitative analysis reveals that damping-only models learn \textbf{$24\%$ stronger mean damping} than full models ($0.726$ vs. $0.586$, $p = 5.2\times 10^{-118}$), representing a \textbf{$163\%$ larger relative increase} from initialization ($45.3\%$ vs. $17.2\%$ increase from $0.5$), as shown in Figure~\ref{fig:ablation1}. This compensatory strengthening explains the ablation results: when frequency adaptation is disabled, models intensify their damping mechanisms to maintain stability, directly corresponding to the observed patterns in Table~\ref{tab:ks_states}.

\begin{figure}[h!]
\centering
\includegraphics[width=1\textwidth]{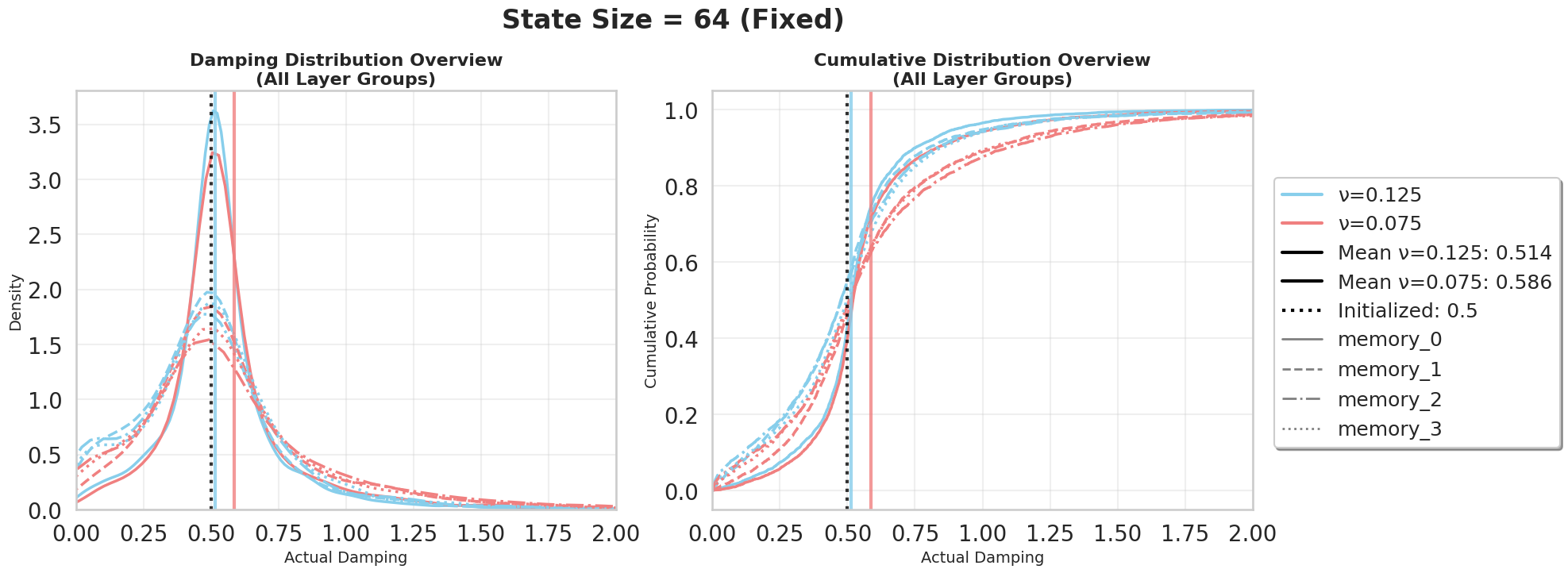}
\caption{Damping coefficient distributions for $\nu = 0.125$ vs.\ $\nu = 0.075$ models with 64 states. }
\label{fig:ablation2}
\end{figure}

\textbf{Problem difficulty modulates damping strength.} Models trained on more chaotic dynamics ($\nu=0.075$) learn \textbf{$14\%$ stronger damping} than those on smoother regimes ($\nu=0.125$, $0.586$ vs. $0.514$, $p = 5.8\times 10^{-71}$), with a \textbf{$537\%$ larger relative increase} from initialization ($17.2\%$ vs. $2.7\%$ increase from $0.5$), as visualized in Figure~\ref{fig:ablation2}. This dramatic differential learning directly mirrors the ablation findings where damping provides exponentially greater benefits in challenging regimes.

\begin{figure}[h!]
\centering
\includegraphics[width=1\textwidth]{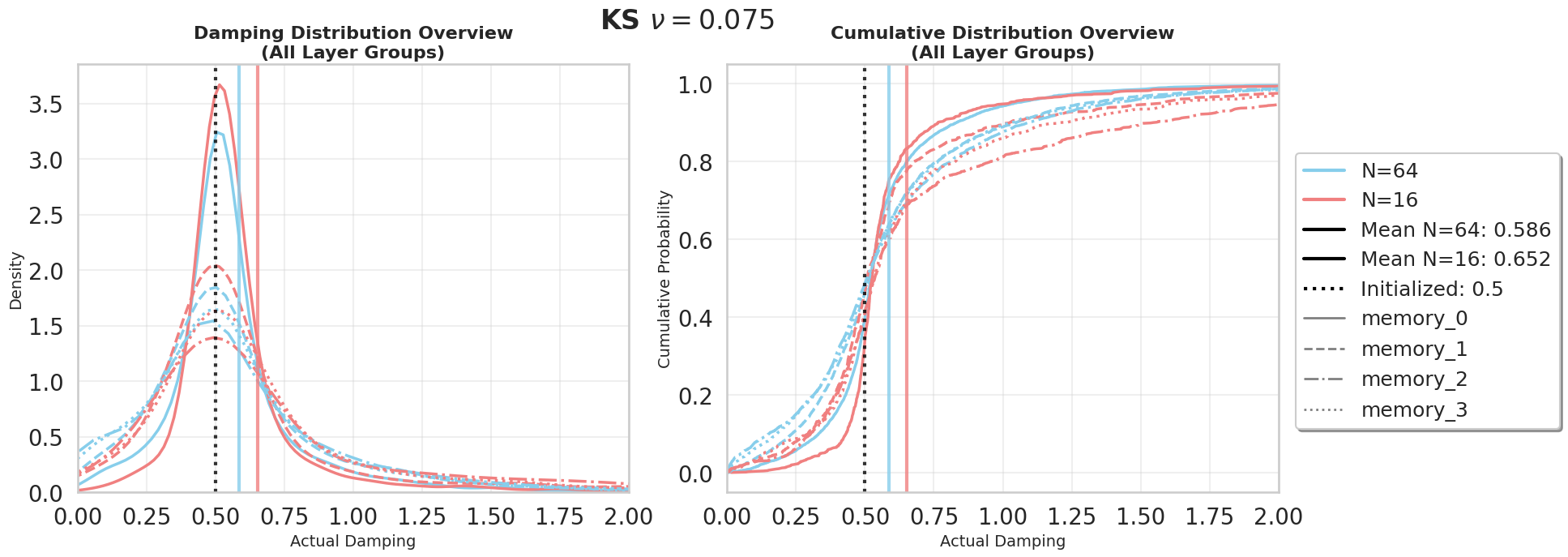}
\caption{Damping coefficient distributions at $\nu = 0.075$ for different state sizes. }
\label{fig:ablation3}
\end{figure}

\textbf{Capacity constraints amplify damping requirements.} Lower-capacity models (16 states) learn \textbf{$11\%$ stronger damping} than higher-capacity counterparts (64 states, $0.652$ vs. $0.586$, $p = 2.2\times 10^{-15}$), with a \textbf{$77\%$ larger relative increase} from initialization ($30.4\%$ vs. $17.2\%$ increase from $0.5$), as demonstrated in Figure~\ref{fig:ablation3}. This intensified damping in capacity-constrained models explains their ability to maintain stability despite reduced parameter counts, directly connecting to the efficiency results in our ablation study.

The consistent pattern across all three analyses—supported by overwhelming statistical significance ($p < 10^{-15}$ in all cases)—demonstrates that \textbf{damping serves as a crucial stabilization mechanism that scales intelligently with architectural and problem constraints}. Models automatically learn to strengthen damping in response to constraints, providing a mechanistic explanation for the performance patterns observed in our ablation study. This adaptive damping behavior represents a key innovation that enables both parameter efficiency and robustness across diverse dynamical regimes.

\begin{table}[h]
\centering
\caption{Relative $\ell_2$ error on 2D Navier--Stokes datasets at $64 \times 64$ resolution.}\label{tab:navierstokes_combined}
\begin{tabular}{lcccccc}
\toprule
\textbf{Architecture} & \textbf{\# Parameters} & \multicolumn{5}{c}{\textbf{Relative $\ell_2$ Error}} \\
\cmidrule[\heavyrulewidth](lr){3-7}
& & \texttt{TorusLi} & \texttt{TorusVis} & \texttt{TorusVisForce} & \texttt{CE-RM} & \texttt{GCE-RT} \\
\midrule
U-Net & $7,783,777$ & $0.0611$ & $0.0474$ & $0.0558$ & $0.0644$ & $0.0300$ \\
Factformer (2D) & $1,006,433$ & $0.0628$ & $0.0337$ & $0.0395$ & $0.0642$ & $0.0315$ \\
GKT & $8,418,049$ & $0.0619$ & $0.0404$ & $0.0742$ & $0.0691$ & $0.0172$ \\
FNO2D & $67,197,700$ & $0.0452$ & $0.0466$ & $0.0444$ & $0.0717$ & $0.0155$ \\
FFNO & $2,192,897$ & $0.0409$ & $0.0231$ & $0.0326$ & $0.0688$ & $0.0196$ \\
SS-NO (ours) & $369,665$ & $\textbf{0.0345}$ & $\textbf{0.0218}$ & $\textbf{0.0263}$ & $\textbf{0.0583}$ & $\textbf{0.0138}$ \\
\bottomrule
\end{tabular}
\end{table}

\subsection{2D Navier–Stokes Equations} 

\paragraph{Navier--Stokes with Fixed Viscosity.}
We evaluate our model on the \texttt{TorusLi} dataset, which contains vorticity fields generated by solving the 2D incompressible Navier--Stokes equation on the unit torus with a fixed forcing term and low viscosity $\nu = 10^{-5}$. This regime is particularly challenging due to reduced diffusion, leading to highly nonlinear and chaotic dynamics.

Table~\ref{tab:navierstokes_combined} reports the relative $\ell_2$ errors of various baselines. Among them, FNO2D achieves an error of $0.0452$. In contrast, SS-NO significantly outperforms this baseline, reducing the error to $0.0345$—representing a $23.7\%$ improvement in predictive accuracy.

This performance gap is notable given the differences in scaling. FNO2D leverages a full 2D Fourier transform to model frequency interactions jointly across both spatial dimensions, with cost scaling as $\mathcal{O}(N^D)$, where $D$ is the number of dimensions and $N$ is the resolution per dimension. In contrast, SS-NO applies separate 1D operators along each axis in a factorized manner, achieving more favorable $\mathcal{O}(N \cdot D)$ complexity while retaining or improving accuracy. These results demonstrate the advantage of temporal-state modeling even in time-invariant contexts, highlighting the role of state-space modeling for capturing fluid dynamics in low-viscosity regimes. Among several factorized connection variants we tested, our configuration delivered the best performance (see Appendix \ref{app:factorized}).

\paragraph{2D Navier-Stokes: Problems with Varying Viscosities and Forces.}
We also evaluate our model on the \texttt{TorusVis} and \texttt{TorusVisForce} datasets from the FFNO benchmark~\citep{tranfactorized}, where both viscosity and external forcing vary across trajectories. In \texttt{TorusVis}, viscosity is sampled uniformly between $10^{-5}$ and $10^{-4}$ and the forcing function is time-independent. \texttt{TorusVisForce} introduces a time-varying forcing function with phase shift $\delta = 0.2$.

Table~\ref{tab:navierstokes_combined} summarizes the results. SS-NO achieves the lowest errors across both datasets, consistently outperforming all baselines. This demonstrates that SS-NO effectively captures spatiotemporal dynamics across mixed-viscosity regimes, where larger viscosity values in the data are significantly easier to predict, while still handling the challenging low-viscosity cases.

\paragraph{Compressible Euler Benchmarks (CE-RM and GCE-RT).} 
We also evaluate SS-NO on compressible Euler benchmarks: the Richtmeyer--Meshkov instability (CE-RM) and the Rayleigh--Taylor instability with gravitational forcing (GCE-RT). These problems involve complex interface dynamics and shocks, making them particularly challenging for predictive modeling.

Table~\ref{tab:navierstokes_combined} shows that CE-RM has relatively high errors across all models, reflecting its intrinsic difficulty. Interestingly, on GCE-RT, GKT and FNO2D perform better compared to other factorized baselines, suggesting that explicitly modeling full 2D spatial interactions can be beneficial for this problem.  

Despite being factorized, SS-NO achieves the lowest relative $\ell_2$ errors on both CE-RM ($0.0583$) and GCE-RT ($0.0138$), demonstrating that it can efficiently capture the dominant spatiotemporal dynamics even in problems with shocks and complex interface interactions. These results highlight that SS-NO combines the efficiency of factorized operations with strong modeling capacity, making it competitive with full 2D approaches in capturing challenging fluid behavior.


\section{Conclusion and Future Work}

We have presented the State Space Neural Operator (SS-NO), a compact neural operator designed to efficiently learn solution operators for time-dependent partial differential equations. Grounded in a theoretical universality result for convolutional operator learning with full field of view, SS-NO generalizes and improves upon prior factorizations such as FNO by integrating adaptive mechanisms—including damping for receptive field localization and learnable frequency modulation for data-driven mode selection—while maintaining temporal causality. Our spatial-only variant exactly recovers F-FNO, yet SS-NO dramatically reduces parameter count by exploiting a linear $\mathcal{O}(N \cdot D)$ scaling compared to the $\mathcal{O}(N^D)$ complexity of high-dimensional convolutions.

Empirically, SS-NO achieves competitive performance across a suite of 1D and 2D benchmarks, including the Burgers', Kuramoto–Sivashinsky, Navier–Stokes, and Euler equations, demonstrating strong accuracy with significantly fewer parameters. In addition, we proposed a dimensionally factorized variant of SS-NO, which scales favorably in higher dimensions and achieves competitive results on challenging 2D fluid dynamics datasets. These findings highlight that state space modeling is not only compatible with operator learning but also offers unique advantages in long-term memory, adaptivity, and efficiency.

While our formulation provides clear benefits for structured spatiotemporal domains, one limitation is the lack of a systematic connection between damping-based receptive field localization and irregular geometries. Developing principled extensions of SS-NO to unstructured meshes and complex domains remains an important direction for future work. Beyond this, we aim to apply SS-NO to higher-dimensional and multi-physics systems, including astrophysical, cosmological, and space plasma dynamics, where efficient and generalizable PDE operators are critical.

\bibliography{tmlr}
\bibliographystyle{tmlr}

\appendix
\section{Appendix}
\section{Universality of Convolutional NOs}
\label{app:universality}

We here provide a rigorous definition of convolutional NOs having a ``full field of view'', and provide the precise statement of our sharp universality condition.

\subsection{Full Field of View}

We describe the required full field of view property in greater detail, which we informally stated as a definition before Theorem \ref{thm:main} in the main text. 

The following assumptions were implicit in that definition: The convolutional NO architecture is of the form 
\[
\Psi(u) = \cQ \circ \cL_L \circ \dots \circ \cL_1 \circ \cR(u),
\]
where the lifting layer $\cR(u)(x) := R(u(x),x)$ and projection layer $\cQ(v(x)) = Q(v(x))$ are given by composition with shallow neural networks $R, Q$ of width $H$ (both applied pointwise to the respective input functions). The hidden layers $\cL_\ell$ are given by 
\[
\cL_\ell(v)(x) 
=
\sigma\left( 
W_\ell v(x) + b_\ell + \int_D \kappa_\ell(x-y;\theta_\ell) v(y) \, dy 
\right),
\]
with channel width $H$. We will assume that $H$ can be chosen as large as required. For the integral kernel $\kappa_\ell$, we assume that, for any choice of $H$ and matrix $A_\ell \in \R^{H\times H}$ acting on the channel dimension, there exists a setting of the parameters $\theta_\ell^\ast$, such that $\kappa_\ell(x;\theta_\ell^\ast) \in \R^{H\times H}$ is a scalar multiple of $A$, i.e. the kernel $\kappa_\ell(\slot;\theta_\ell^\ast)$ is the form
\begin{align}
\label{eq:scalarization}
\kappa_\ell(x;\theta_\ell^\ast) \in \R^{H\times H}
=
g_\ell(x) \, A_\ell,
\quad
\text{$g_\ell: D \to \R$ some scalar function.}
\end{align}
To our knowledge, all proposed architectures, including FNO, factorized FNO, SSM, as well as (UNet-style) convolutional NOs with localized kernels have this property.

The precise definition of the full field of view property is then the following:

\begin{definition}[rigorous]
\label{def:fullfield}
An architecture is said to have a full field of view, if for any channel width $H$ and $A_\ell \in \R^{H\times H}$ with product $A_L\cdots A_1 \ne 0$, we can find parameters such that \eqref{eq:scalarization} holds, and the iterated kernel
\begin{align}
\label{eq:iterated1}
\bar{\kappa}(x) = (\kappa_L\ast \dots \ast \kappa_1)(x) \equiv (g_L \ast \dots \ast g_1)(x) \, A_L \cdots A_1,
\end{align}
\sam{has full support, in the sense that the support\footnote{We recall that the support of a general function $f(z)$ is the closure of the set $\{z \,|\, f(z) \ne 0\}$.} of the function $(x,y) \mapsto \bar{\kappa}(x-y)$ is $D\times D$. }
\end{definition}

\sam{
\begin{remark}
In particular, $\bar{\kappa}$ satisfies the full field of view property, if $\bar{\kappa}$ is nowhere-vanishing, i.e. for any $x,y\in D$, we have $\bar{\kappa}(x-y) \ne 0$.
\end{remark}
}

\sam{
\subsection{Intuitive Explanation of the Full Field of View Property}
\label{app:intuitive}
Before stating our universality theorem, based on the ``full field of view'' property, we discuss its intuitive meaning in this section. 

A general (non-linear) operator $\mathcal{G}: u \mapsto \mathcal{G}(u)$ takes an input function $u(x)$ and maps it to another function $v(x) = \mathcal{G}(u)(x)$. In general, such operators are \emph{non-local}, meaning that the value of the output function $v(x)$ at a point $x$ depends on the values of the input function $u(y)$, for $y\in D$, across the entire domain $D$. 

A simple example of an operator with non-local dependency is an operator with a general integral kernel:
\[
\mathcal{G}(u)(x) := \int_D k(x,y) u(y) \, dy.
\]
To determine the value of the output function at $x$ for such $\mathcal{G}$, it is clear that we generally require knowledge of $u(y)$ at all points $y\in D$. While this illustrative example defines a linear operator $\mathcal{G}$, more general forms of non-locality are present in most operators of interest (such as solution operators of non-linear PDEs), which generally are both non-local and non-linear.

Thus, it is intuitively clear that a universal operator-learning architecture must provide a mechanism that enables information to flow from the input function $u(y)$ at any point $y\in D$ to the output function $v(x)$ at any $x\in D$. 

For the convolutional architectures discussed in this work, the only mechanism to communicate information non-locally during the forward pass is via the convolutional integral kernels $v(x) \mapsto \int_D \kappa_\ell(x-y) v(y) \, dy$. The iterated kernel \eqref{eq:iterated1} is a mathematical way to capture the flow of information across all layers: Indeed, if $\bar{\kappa}(x-y) \ne 0$, this means that (at least some) information about the input function at $y$ can flow to the output function at $x\in D$. However, a priori, it is unclear whether this specific mechanism to transfer information non-locally is sufficiently informative. Indeed, one might assume that the $x$-dependency of the kernels $\kappa_\ell(x) = g_\ell(x) A_\ell$, i.e. the functions $g_\ell(x)$ themselves, need to be very specifically tuned to ensure all relevant information on the input side to correctly reach the output side.

The main theoretical insight of the present work is to show that no specific tuning of $g_\ell(x)$ is required for universality. In fact, \emph{any} functions $g_1, \dots, g_L$ will do, as long as they ensure a full field of view and as long as we can tune the matrices $A_1, \dots, A_L \in \mathbb{R}^{H \times H}$ (and allow arbitrarily large hidden channel dimension $H$).  Given this discussion, we view the ``full field of view'' property as a ``minimal'' property to ensure the needed flow of information through our architecture. In Theorem \ref{thm:sufficient} below, we will rigorously prove that this ``minimal'' property is already sufficient.

}

\subsection{A Sufficient Condition for Universality}

We can now state and prove our universality result for general convolutional NOs.

\begin{theorem}
\label{thm:sufficient}
Let $\cG: C(D; \R^{d_i}) \to C(D;\R^{d_o})$ be a continuous operator, with either $D \subset \R^d$ compact or $D=\mathbb{T}^d$ the periodic torus. Let $\cK \subset C(D;\R^{d_i})$ be a compact set of input functions. Let $\Psi(u)$ be a convolutional NO architecture with a full field of view. Then for any $\epsilon > 0$, there exists a channel width $H$, and a setting of the weights of $\Psi$, such that 
\[
\sup_{u\in \cK} \sup_{x\in D} |\cG(u)(x) - \Psi(u)(x)| \le \epsilon.
\]
\end{theorem}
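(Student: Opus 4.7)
The plan is to adopt the encoder--processor--decoder template used in FNO and DeepONet universality proofs, with the role of the Fourier modes played by the nonvanishing iterated kernel $\bar{\kappa}$. The full-field-of-view property from Definition~\ref{def:fullfield} enters at a single crucial juncture: every output location $x$ obtains a nonzero weighted aggregate of every input location $y$, and since $\bar{\kappa}(x-y)$ is continuous on the compact set $D-D$, it is automatically bounded away from $0$, which is what makes the \emph{decoding} step well-posed.

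First I would reduce to a finite-dimensional target. By uniform equicontinuity of the compact family $\cK \subset C(D;\R^{d_i})$ together with continuity of $\cG$, a standard mollification/interpolation argument yields a finite grid $\{y_1,\dots,y_m\} \subset D$ and a continuous function $F: \R^{m d_i}\times D \to \R^{d_o}$ with
\[
\sup_{u \in \cK}\sup_{x\in D}\bigl|\cG(u)(x) - F(u(y_1),\dots,u(y_m),x)\bigr| < \epsilon/3,
\]
so it suffices to approximate $u \mapsto F(u(y_1),\dots,u(y_m),\slot)$ by $\Psi$.

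Next I would build an encoding that recovers the grid samples as hidden channels. Using the shallow network $\cR(u)(y)=R(u(y),y)$, produce, for $j=1,\dots,m$, channels of the form $\phi_j(y)u(y)$ with $\phi_j$ a smooth bump tightly localized near $y_j$, together with a positional channel carrying $y$ itself. Passing through the $L$ hidden layers with parameters chosen via Definition~\ref{def:fullfield}---tuning $W_\ell,b_\ell$ so that $\sigma$ operates in an approximately affine regime, and choosing routing matrices $A_\ell$ whose product acts as the identity on the encoded subspaces---the $j$-th bump channel becomes at location $x$
\[
v_j(x) \;\approx\; \bar{\kappa}(x-y_j)\,u(y_j)\,\|\phi_j\|_{L^1},
\]
while the positional channel is carried through via the skip term $Wv(x)$ in \eqref{eq:layer}. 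Setting $c_j(x) := \bar{\kappa}(x-y_j)\|\phi_j\|_{L^1}$, which is a fixed continuous nowhere-vanishing function of $x$, the composition $(v_1(x),\dots,v_m(x),x)\mapsto F\bigl(v_1(x)/c_1(x),\dots,v_m(x)/c_m(x),x\bigr)$ is continuous on the compact range its arguments occupy, so the universal approximation theorem for shallow networks lets $\cQ$ approximate it to within $\epsilon/3$, and the triangle inequality closes the argument.

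The main obstacle I anticipate is the choreography in the encoding step: within the rigid template $\sigma(Wv+b+\int\kappa\,v\,dy)$ one must simultaneously (i) route the $m+1$ encoding channels through all $L$ layers without destructive interference, (ii) keep activations in a near-linear region of $\sigma$ so that successive convolutions genuinely compose into the iterated kernel $\bar{\kappa}$ promised by Definition~\ref{def:fullfield}, and (iii) preserve the positional channel for downstream use by $\cQ$. This will force a careful choice of the channel width $H$ (allowed arbitrarily large by hypothesis) large enough to separate the encoded subspaces and accommodate the prescribed product $\bar A = A_L\cdots A_1$, together with a uniform control over $\cK$ of the bump-localization error $\int \bar{\kappa}(x-y)\phi_j(y)u(y)\,dy - \bar{\kappa}(x-y_j)u(y_j)\|\phi_j\|_{L^1}$. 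The remaining pieces---the two universal approximation invocations for $\cR$ and $\cQ$, and the finite-grid reduction---are essentially standard.
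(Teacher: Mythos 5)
Your proposal is correct in outline, but it takes a genuinely different route from the paper. The paper never reduces to point evaluations: it identifies operators $\cG$ with elements of $C(\cK\times D)$ and applies the Stone--Weierstrass theorem to the set $\mathbb{A}$ of realizable convolutional NOs, verifying that $\mathbb{A}$ contains constants, is an approximate subalgebra (closure under products is obtained by parallelizing two networks and letting a new projection layer $\cQ$ approximate the product of the two old ones), and separates points. The full-field-of-view hypothesis is used only in the point-separation step: for $u_1\ne u_2$ at a common $x$, the lifting layer is chosen to approximate a bump $\rho_\delta(y-y_0)h_{\tau_1,\tau_2}(u(y))$ that vanishes on $u_1$ but not on $u_2$, and non-vanishing of $\bar g$ guarantees $\int \bar g(x-y)\rho_\delta(y-y_0)\,dy\ne 0$. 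Your encode--process--decode argument instead uses the nonvanishing iterated kernel to \emph{reconstruct} all samples $u(y_j)$ at every output location and is therefore more constructive (it could in principle yield rates in terms of $m$ and moduli of continuity), at the cost of heavier bookkeeping: you must route $m+1$ channels through $L$ nonlinear layers in parallel, whereas the paper's separation argument only ever needs one active channel. Both proofs rest on the same technical core, namely that the composition $\cL_L\circ\cdots\circ\cL_1$ can emulate a single convolution with $\bar g=g_L\ast\cdots\ast g_1$; on this point your phrase ``tuning $W_\ell,b_\ell$ so that $\sigma$ operates in an approximately affine regime'' should be made precise the way the paper does it, since within the rigid template $\sigma(Wv+b+\int\kappa v)$ the affine ``undo'' of one layer's $\sigma$ cannot live in that same layer: one builds a shallow-net approximate identity $C_\ast\sigma(A_\ast\xi+\beta_\ast)+\alpha_\ast\approx\xi$ and absorbs $C_\ast$ and $\alpha_\ast$ into the \emph{next} layer's kernel matrix $A_{\ell+1}=A_\ast C_\ast$ and bias $b_{\ell+1}=\alpha_\ast\int g_{\ell+1}+\beta_\ast$, which is exactly what the scalarization assumption \eqref{eq:scalarization} permits. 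With that mechanism substituted for your informal linearization, and the standard partition-of-unity reduction to $F(u(y_1),\dots,u(y_m),x)$ carried out uniformly over the equicontinuous family $\cK$, your argument goes through; the division by $c_j(x)=\bar g(x-y_j)\|\phi_j\|_{L^1}$ is indeed well-posed precisely because $\bar g$ is continuous and nowhere zero on the compact set $D-D$, which is where the full-field-of-view hypothesis enters your proof.
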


\subsubsection{Proof of sufficiency of ``full field of view'' property (Theorem \ref{thm:sufficient})}

\begin{proof}
For simplicity, we will assume that $d_i = d_o = 1$. The proof easily extends to the more general case. The space of continuous operators $\cG: \cK \to C(D)$, $u(x) \mapsto \cG(u)(x)$, is isometrically isomorphic to the space $C(\cK \times D)$, by identifying $\cG(u,x) := \cG(u)(x)$. By assumption, both $\cK$ and $D$ are compact sets. 

Given the choice of convolutional NO architecture, we now fix $\kappa_1, \dots, \kappa_L$ such that $\bar{\kappa}$ is nowhere vanishing. Let us now introduce the set $\mathbb{A}$ consisting of all operators that can be represented by a choice of channel width $H$ and a choice of tunable parameters:
\[
\mathbb{A} := \left\{ 
\begin{array}{c}
\Psi = \cQ \circ \cL_L \circ \dots \circ \cL_1 \circ \cR, \text{ $\Psi$ is a convolutional NO} \\
\text{with kernels $\kappa_1, \dots, \kappa_L$ and channel width $H$}
\end{array}
\right\}.
\]
Our goal is to show that $\mathbb{A} \subset C(\cK \times D)$ is dense. We denote by $\overline{\mathbb{A}}$ the closure of $\mathbb{A}$ in $C(\cK\times D)$ (set of limit points). To prove that $\mathbb{A} \subset C(\cK\times D)$ is dense, we will use the following result, which follows from the Stone-Weierstrass theorem:
\begin{lemma}
\label{lem:stone-weierstrass}
A subset $\mathbb{A}\subset C(\cK\times D)$ is dense, if 
\begin{itemize}
\item The constant function $1 \in \mathbb{A}$,
\item $\mathbb{A}$ separates points: For any $(u_1,x_1), (u_2,x_2) \in \cK\times D$, there exists $\Psi\in \mathbb{A}$ such that $\Psi(u_1,x_1) \ne \Psi(u_2,x_2)$.
\item $\mathbb{A}$ is an approximate vector subalgebra: 
\begin{itemize}
\item $\mathbb{A}$ is closed under addition and scalar multiplication (i.e. vector subspace), 
\item $\mathbb{A}$ is approximately closed under multiplication: For any $\Psi_1, \Psi_2$, the product $\Psi_1 \cdot \Psi_2 \in \overline{\mathbb{A}}$, where 
\[
(\Psi_1 \cdot  \Psi_2)(u,x) = \Psi_1(u,x) \Psi_2(u,x),
\]
is the pointwise multiplication.
\end{itemize}
\end{itemize}
\end{lemma}

Our goal is to show these properties for $\mathbb{A}$ to conclude that $\mathbb{A}\subset C(\cK\times D)$ is dense. 

\paragraph{$\mathbb{A}$ contains constants.} It is very easy to show that the constant function $1 \in \mathbb{A}$, by defining a convolutional NO that disregards the input and has constant output $=1$.

\paragraph{$\mathbb{A}$ is an approximate subalgebra.} To show the other properties, we first note that $\mathbb{A}$ is closed under scalar multiplication and under addition, i.e. 
\[
\Psi_1, \Psi_2 \in \mathbb{A} 
\quad \Rightarrow \quad
\lambda_1 \Psi_1 + \lambda_2 \Psi_2 \in \mathbb{A}, \quad \forall \lambda_1, \lambda_2 \in \R.
\]
If $H_1$ and $H_2$ are the channel widths of $\Psi_1$ and $\Psi_2$, respectively, this conclusion follows by a simple parallelization of $\Psi_1$ and $\Psi_2$, and employing the last projection $\cQ$-layer to sum (and scale) the parallelized results. Thus, $\mathbb{A}$ is a vector subspace of $C(\cK \times D)$.

To show that $\mathbb{A}$ is approximately closed under multiplication, i.e. $\Psi_1, \Psi_2 \in \mathbb{A}$ implies that $\Psi_1  \cdot \Psi_2 \in \overline{\mathbb{A}}$, we fix arbitrary $\Psi_1, \Psi_2 \in \mathbb{A}$. We will denote by $\hat{\Psi}_j$ the NO $\Psi_j$ for $j=1,2$, but where the last $\cQ$ projection-layer has been removed:
\[
\Psi_1 \equiv \cQ_1 \circ \hat{\Psi}_1, 
\qquad
\Psi_2 \equiv \cQ_2 \circ \hat{\Psi}_2.
\]
Assuming wlog that the channel width $H_1 = H_2 = H$ (otherwise, we can pad the channel width by zeros), we note that these incomplete NOs $\hat{\Psi}_1$ and $\hat{\Psi}_2$ define continuous mappings $\cK\times D \to \R^{H}$. Since $\cK \times D$ is compact, it follows that also the images $\hat{\Psi}_j(\cK \times D) \subset \R^H$ are compact (since compact sets are mapped to compact sets under continuous maps). In particular, there exists $B>0$, such that 
\[
\hat{\Psi}_1(u,x), \hat{\Psi}_2(u,x) \in [-B,B]^H, \quad \forall \, u \in \cK, \, x\in D.
\]
Consider now $\cQ_1$ and $\cQ_2$, i.e. the last layers of $\Psi_1$ and $\Psi_2$, respectively. The following product mapping 
\[
[-B,B]^H \times [-B,B]^H \to \R,
\quad
(v_1, v_2) \to \cQ_1(v_1) \cdot \cQ_2(v_2),
\]
is continuous. By the universality of shallow neural networks, for any $\epsilon > 0$, there thus exists a shallow net $\cQ: \R^{2H} \to \R$, such that 
\[
\sup_{|v_1|_\infty,|v_2|_\infty\le B} \left| \cQ(v_1,v_2) - \cQ_1(v_1) \cdot \cQ_2(v_2) \right| \le \epsilon.
\]
Define now a new NO as $\Psi(u,x) := \cQ([\hat{\Psi}_1(u,x), \hat{\Psi}_2(u,x)])$, i.e. $\cQ$ applied to the parallelization of $\hat{\Psi}_1$ and $\hat{\Psi_2}$. Then
\begin{align*}
&\sup_{\cK\times D} |\Psi(u,x) - \Psi_1(u,x) \cdot \Psi_2(u,x) |
\\
&\hspace{2cm} =  
\sup_{\cK\times D} |\cQ([\hat{\Psi}_1(u,x), \hat{\Psi}_2(u,x)]) - \cQ_1(\hat{\Psi}_1(u,x)) \cdot \cQ_2(\hat{\Psi}_2(u,x)) |
\\
&\hspace{2cm} \le
\sup_{|v_1|_\infty,|v_2|_\infty\le B} |\cQ([v_1, v_2]) - \cQ_1(v_1) \cdot \cQ_2(v_2) |
\\
&\hspace{2cm} \le \epsilon.
\end{align*}
Since $\epsilon>0$ was arbitrary, this shows that $\Psi_1\cdot \Psi_2$ is a limit point of $\Psi \in \mathbb{A}$, thus $\Psi_1\cdot \Psi_2 \in \overline{\mathbb{A}}$. We conclude from the above that $\mathbb{A}$ is an approximate vector subalgebra.

\paragraph{$\mathbb{A}$ separates points.}
To conclude our proof, it only remains to show that $\mathbb{A}$ separates points. Let $(u_1, x_1) \ne (u_2, x_2)$ be two distinct elements in $\cK\times D$. There are two cases: Either $x_1 \ne x_2$, or $x_1=x_2$ and $u_1 \ne u_2$.

\textbf{Case 1: $x_1 \ne x_2$.}
We want to construct $\Psi \in \mathbb{A}$, such that $\Psi(u_1,x_1) \ne \Psi(u_2, x_2)$. This is easy, since we can always use the lifting layer $\cR$ to eliminate the dependency of $\Psi$ on the $u$-variable. Once the weights acting on the $u$-variable have been set to zero, we then have that $\Psi(u,x) = \psi(x)$ is an ordinary multilayer perceptron. Since $x_1 \ne x_2$, we can easily choose $\psi$ such that e.g. $\psi(x_1) = 0$ and $\psi(x_2) = 1$. Thus $\Psi(u_1,x_1) = \psi(x_1) \ne \psi(x_2) = \Psi(u_2,x_2)$.

\textbf{Case 2: $x_1 = x_2 = x^\ast$ and $u_1 \ne u_2$.} This is the more difficult case. In the following argument, we will appeal to the universality of shallow neural networks, and their compositions, multiple times. We will forego the tedious $\epsilon$-$\delta$ estimates, and instead sketch out the general idea; filling in the details is a straightforward exercise that would not provide any additional insight. 

\sam{
The underlying idea is that -- roughly by linearization of the hidden layers of the underlying non-linear convolutional architecture -- we can construct $\Psi$ which approximately takes the form,
\begin{align*}
\Psi(u)(x) 
&= \int_D \bar{g}(x - y) R_1(u(y), y) \, dy
\\
&= \int_D (g_L \ast g_{L-1} \ast \dots \ast g_1)(x - y) R_1(u(y), y) \, dy
\end{align*}
where $R_1$ is an arbitrary multilayer perceptron. Given this simplified form, and since $\bar{g}$ has full support by assumption, it turns out that we can always construct suitable $R_1$ that ensures that the we can separate the input functions $u_1$ and $u_2$, ultimately ensuring that $\Psi(u_1)(x^\ast) \ne \Psi(u_2)(x^\ast)$. We provide the details of the construction below, starting with the ``linearization trick''.
}

\textbf{Linearization trick:} We construct $\Psi$ as follows: First, given matrices $A_\ell$ and functions $g_\ell$ as in the (rigorous) definition \ref{def:fullfield} of ``full field of view'', we choose the weights of the hidden layers $\cL_\ell$, such that 
\[
\cL_\ell(v)(x) 
=
\sigma\left(
A_\ell \int_D g_\ell(x-y) v(y) \, dy + b_\ell
\right).
\]
This is possible by assumption on the convolutional NO architecture, cp. \eqref{eq:scalarization}. By universality of shallow neural networks, we can choose $H$ sufficiently large, and choose matrices $C_\ast, A_\ast$ and biases $\alpha_\ast, \beta_\ast$, such that for all relevant input vectors $\xi = (\xi_1,0,\dots, 0)$ (effectively one-dimensional), we have
\begin{align}
\label{eq:id}
C_\ast \sigma\left( A_\ast \xi + \beta_\ast\right) + \alpha_\ast
\approx \xi,
\end{align}
i.e. the resulting shallow neural network is an approximation of the identity on one-dimensional $\xi$, to any desired accuracy. 

We now momentarily focus on the case $L=2$. In this case, we choose $A_1 = A_\ast$, $b_1 = b_\ast$, then choose $A_2 = A_\ast C_\ast$, and bias $b_2 = \alpha_\ast \int_D g(x-y) \, dy + \beta_\ast$, so that 
\[
\cL_2 \circ \cL_1(v)(x)
=
\sigma\left(
A_2 \int_D g_2(x-y) \cL_1(v)(y) \, dy
+ 
b_2
\right),
\]
where 
\[
A_2\int_D g_2(x-y) \cL_1(v)(y) \, dy
+ b_2
= 
A_\ast \int_D g_2(x-y)  \left\{ C_\ast \cL_1(v)(y) + \alpha_\ast \right\} \, dy + b_\ast.
\]
We now note that, by construction, we have for any hidden state function of the form $v(y) = [v_1(y),0,\dots, 0]^T$:
\[
 C_\ast \cL_1(v)(y) + \alpha_\ast
 =
  C_\ast \sigma\left(
  A_\ast (g_1\ast v)(y) + \beta_\ast
  \right)
  + \alpha_\ast
  \approx \int_D g_1(y-z) v(z) \, dz,
\]
where we have used \eqref{eq:id} and note that this approximation is to any desired accuracy. It follows that also 
\begin{align*}
\cL_2 \circ \cL_1(v)(x)
&\approx
\sigma\left(
A_\ast \int_D g_2(x-y) \int_D g_1(y-z)v(z) \, dz \, dy
+ 
b_\ast
\right)
\\
&=
\sigma\left(
A_\ast \int_D (g_2 \ast g_1)(x-y) v(y) \, dy
+ 
b_\ast
\right),
\end{align*}
to any desired accuracy. This shows that for two hidden layers $\cL_1,\cL_2$ there exists a choice of the parameters, such that for a hidden state $v(x) = (v_1(x), 0,\dots, 0)$, we can obtain an arbitrarily good approximation
\[
\cL_2 \circ \cL_1(v)(x)
\approx
\sigma\left(
A_\ast \int_D (g_2 \ast g_1)(x-y) v(y) \, dy
+ 
b_\ast
\right).
\]
Iterating this argument, for general $L\ge 2$, we start from $\cL_L\circ \cL_{L-1}\circ \dots \cL_1$, and we first choose the weights of $\cL_L$ and $\cL_{L-1}$ such that
\[
\cL_{L} \circ \cL_{L-1}(v)(x)
\approx
\sigma\left(
A_\ast \int_D (g_L \ast g_{L-1})(x-y) v(y) \, dy
+ 
b_\ast
\right).
\]
This can be done by the argument employed for the case $L=2$.
Next, we can apply the same argument again to choose the weights of $\cL_{L-2}$, such that 
\begin{align*}
\cL_{L} \circ \cL_{L-1} \circ \cL_{L-2}(v)(x)
&=
\left( \cL_{L} \circ \cL_{L-1} \right) \circ \cL_{L-2}(v)(x)
\\
&\approx
\sigma\left(
A_\ast \int_D (g_L \ast g_{L-1} \ast g_{L-2})(x-y) v(y) \, dy
+ 
b_\ast
\right), 
\end{align*}
and continue similarly, until 
\[
\cL_{L} \circ \dots \circ \cL_{1}(v)(x)
\approx
\sigma\left(
A_\ast \int_D (g_L \ast \dots \ast g_{1})(x-y) v(y) \, dy
+ 
b_\ast
\right),
\]
to any desired accuracy. This argument is based on the assumption that $v(x) = (v_1(x), 0, \dots, 0)$. 

We next add a projection layer $\cQ$ to this composition, of the form $\cQ(v)(x) = Q(C_\ast v(x) + \alpha_\ast)$, where $Q$ is a shallow neural network, to obtain
\[
\cQ \circ \cL_{L} \circ \dots \circ \cL_{1}(v)(x)
\approx
Q\left( 
\int_D \bar{g}(x-y) v(y) \, dy
\right),
\quad 
\bar{g}(x) := (g_L \ast \dots \ast g_{1})(x).
\]
Recall that \sam{the support of $(x,y) \mapsto \bar{g}$ is $D\times D$, which, roughly speaking, might be thought of as $\bar{g}(x-y) \ne 0$ for all $x,y\in D$.} The non-vanishing support is by assumption (full field of view).

Pre-composing with a lifting layer $\cR(u)(x) = R(u(x),x)$, which we choose to have only a non-vanishing first component on the output-side, i.e. $R(u(x),x) = (R_1(u(x),x), 0,\dots, 0)$, it follows that we can construct $\Psi(u)(x) := \cQ\circ \cL_L \circ \dots \circ \cL_1 \circ \cR$, such that 
\[
\Psi(u)(x) \approx Q\left( 
\int_D \bar{g}(x-y) R_1(u(y),y) \, dy
\right),
\]
where the approximation error can be made arbitrarily small. Here, we are still free to choose $Q$ and $R_1$. 

Our goal is to choose them in such a way that for our given functions $u_1 \ne u_2$ and the given point $x^\ast = x_1 = x_2 \in D$, we have $\Psi(u_1)(x_1) = \Psi(u_1)(x^\ast) \ne \Psi(u_2)(x^\ast) = \Psi(u_2)(x_2)$. We will choose $Q$ as an approximation of the identity on the first component, so that
\[
\Psi(u)(x) \approx \int_D \bar{g}(x-y) R_1(u(y),y) \, dy.
\]

\textbf{Separating $(u_1,x_1)$ from $(u_2,x_2)$ when $u_1\ne u_2$, $x_1 = x_2 = x^\ast$:} By assumption, $\bar{g}(x^\ast-y) \ne 0$ has full support for $y\in D$. Since $u_1\ne u_2$, there exists $y_0\in D$ such that $u_1(y_0) \ne u_2(y_0)$. We may wlog assume that $u_1(y_0) < \tau_1 < \tau_2 < u_2(y_0)$ for some $\tau_1,\tau_2 \in \R$. By continuity of $u_1,u_2$, there exists $\delta > 0$, such that $\max_{B_\delta(y_0)} u_1(y) < \tau_1 < \tau_2 <  \min_{B_\delta(y_0)} u_2(y_0)$. By the universality of the shallow network $R_1$, we can choose $R_1$, such that we approximately have
\[
R_1(\eta, y) \approx \rho_\delta(y-y_0) h_{\tau_1,\tau_2}(\eta), 
\]
where $\rho_\delta(\slot -y_0)$ is a non-negative function supported inside $B_\delta(y_0)$, and $h_{\tau_1,\tau_2}(\slot)$ is a non-negative function, such that 
\[
h_{\tau_1,\tau_2}(\eta)
=
\begin{cases}
1, &(\eta > \tau_2), \\
0, &(\eta < \tau_1).
\end{cases}
\]
Since $y \mapsto \bar{g}(x^\ast-y)$ has full support, and since $\rho_{\delta}(y-y_0)$ is an arbitrary function apart from the constraints we imposed above, we can further refine our choice of $\rho_\delta$, to ensure that 
\[
\int_{D} \bar{g}(x^\ast-y) \rho_{\delta}(y-y_0) \, dy \ne 0.
\]
It then follows that
\[
R_1(u_1(y), y) \approx \rho_\delta(y-y_0) h_{\tau_1,\tau_2}(u_1(y)) \equiv 0, 
\]
and 
\[
R_1(u_2(y), y) \approx \rho_\delta(y-y_0) h_{\tau_1,\tau_2}(u_2(y)) \equiv \rho_\delta(y-y_0).
\]
In particular, we conclude that -- to any desired accuracy -- we can construct $\Psi(u)$, such that
\[
\Psi(u_1)(x^\ast) 
\approx \int_D \bar{g}(x^\ast-y) R_1(u_1(y),y) \, dy 
= 0
\]
and
\[
\Psi(u_2)(x^\ast) 
\approx \int_D \bar{g}(x^\ast-y) R_1(u_2(y),y) \, dy 
\approx \int_D \bar{g}(x^\ast-y) \rho_{\delta}(y-y_0) \, dy \ne 0.
\]
In particular, given the above two approximate identities, upon making the approximation errors sufficiently small, it follows that there exists $\Psi \in \mathbb{A}$ such that $\Psi(u_1)(x_1) = \Psi(u_1)(x^\ast) \ne \Psi(u_2)(x^\ast) = \Psi(u_2)(x_2)$. This finally shows that $\mathbb{A}$ separates points inputs $(u_1,x_1)$ and $(u_2,x_2)$, under the assumption that $u_1\ne u_2$ and $x_1 = x_2 = x^\ast$, and concludes our proof for \textbf{Case 2}.

Our proof of the universality of $\mathbb{A}\subset C(\cK\times D)$ now concludes by application of the Stone-Weierstrass theorem (cp. Lemma \ref{lem:stone-weierstrass}).

\end{proof}

\section{Further Ablation Studies}
\label{app:ablation}

\subsection{Unidirectional Spatial SSM}
\label{app:unidirectional}
\begin{figure}[h]
    \centering
    \includegraphics[width=1\linewidth]{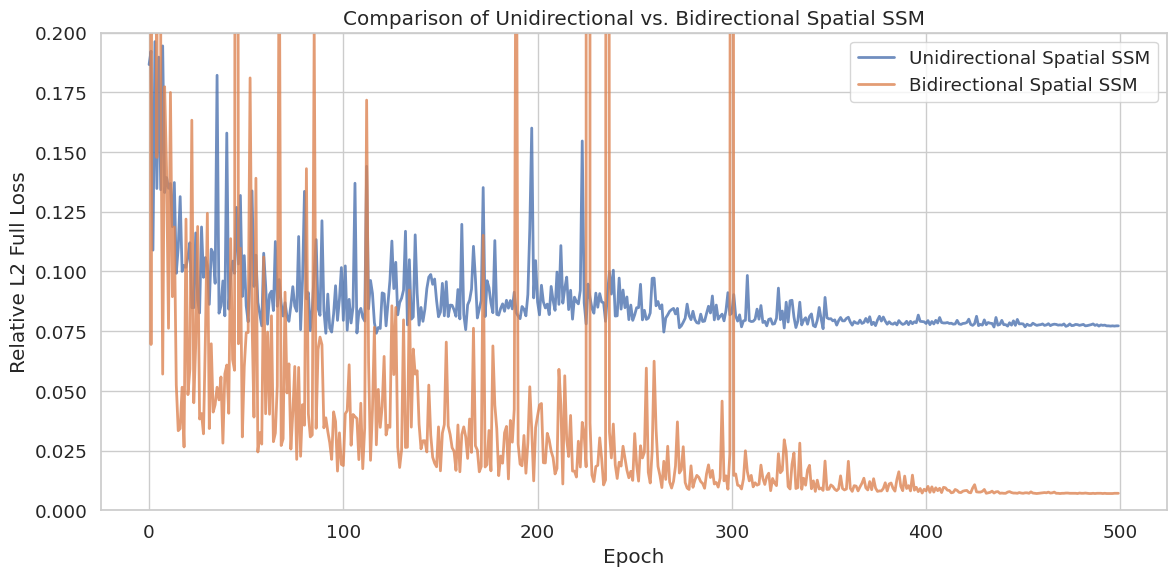} 
    \caption{Training curves comparing the unidirectional and bidirectional spatial SSMs on the \texttt{TorusLi} dataset. The unidirectional variant plateaus early, failing to capture global spatial dependencies.}
    \label{fig:unidirectional}
\end{figure}

We conduct an ablation study to evaluate the importance of the bidirectional spatial module in our SS-NO architecture on 1D Burgers' Equation. To ensure a fair comparison that isolates the effect of directionality, we construct a unidirectional model with the same parameter count as the full SS-NO. This is done by stacking 8 layers of forward-only spatial SSM blocks and inserting a single temporal SSM layer in the middle.

As shown in Figure~\ref{fig:unidirectional}, the unidirectional variant fails to capture the dynamics effectively. Its training curve plateaus early, and the relative $\ell_2$ error stagnates around $0.075$. In contrast, the full bidirectional model continues to improve and ultimately reaches a much lower error of approximately $0.007$—more than ten times better. These results highlight that unidirectional spatial SSMs are fundamentally limited in their representational capacity for 2D PDEs and act as non-universal approximators in this setting -- indeed, this unidirectional spatial SSM \emph{is lacking} a full field of view, violating the criterion for universality in Theorem \ref{thm:main}. Bidirectional context is essential to capture the long-range spatial dependencies needed for accurate forecasting.

\subsection{Effect of Missing Contextual Information and the Role of Temporal Modeling}
\label{sec:ablation_temporal}

To better understand the role of temporal modeling in our architecture, we evaluate the full SS-NO against an ablated variant that removes the temporal state-space module (S4), as well as two strong baselines (FFNO and FNO2D). Figure~\ref{fig:context_ablation} shows the relative $\ell_2$ error across three levels of contextual information: \textit{all context}, \textit{only forcing}, and \textit{no context}, on both the \texttt{TorusVis} and \texttt{TorusVisForce} datasets.

Three main observations emerge from these results. First, the temporal S4 component is necessary in the most challenging \textit{No Context} setting, where both viscosity and forcing inputs are missing. Without temporal modeling, SS-NO suffers from very high errors (e.g., $0.1156$ vs. $0.0411$ on \texttt{TorusVis}), whereas the full model remains comparatively robust by leveraging temporal dependencies to compensate for missing information. These results extend the observations made by~\citet{buitragobenefits}, who showed that temporal memory helps mitigate the effects of resolution loss and noisy data. In our case, we show that memory-based modeling also helps compensate for missing physical parameters—highlighting an additional benefit of incorporating temporal memory into spatiotemporal PDE models.

Second, in the \textit{Only Force} setting, the presence of forcing information appears to be sufficient for accurate predictions in our architecture: both the full and ablated SS-NO achieve consistently low errors, while FFNO exhibits a slight degradation. This suggests that temporal modeling is not strictly necessary when forcing is available, as even the ablated variant performs competitively.

Finally, in the extreme case where all context is missing, FFNO surprisingly outperforms SS-NO (full), despite performing worse in other settings. We hypothesize this could be the result of the difference between how FFNO and FNO/SS-NO make residual connections, since it seems like the degree of performance loss is similar between SS-NO (full) and FNO2D, but we do not do further investigation as it is out of the scope of this work.

\begin{figure}[h]
    \centering
    \includegraphics[width=1\linewidth]{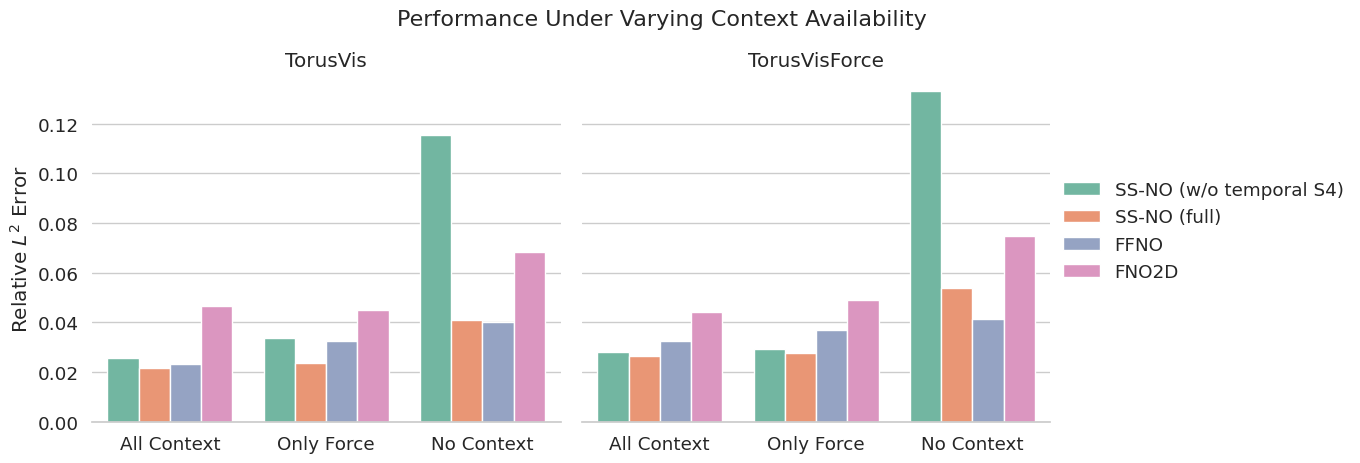}
    \caption{Relative $\ell_2$ error across three levels of contextual information: full context, only forcing, and no context. Lower is better.}
    \label{fig:context_ablation}
\end{figure}

\subsection{Effect of Varying the Memory Window Size}
\label{app:window}

We investigate how the choice of memory window size $K$ affects the performance of our SS-NO model on the KS dataset with $\nu = 0.075$. This hyperparameter controls how many past spatial feature maps are accessible to the model during temporal state updates. All results reported in this section are obtained from models trained \textit{without teacher forcing} to better reflect deployment conditions.

As shown in Figure~\ref{fig:window_size}, increasing $K$ consistently improves performance up to around $K = 8$. Beyond this point, the performance gains become increasingly marginal, and the validation loss begins to plateau. Notably, setting $K = 0$, which corresponds to no temporal memory, results in significantly degraded performance. These results highlight the critical importance of temporal memory in modeling complex spatiotemporal dynamics, while also suggesting diminishing returns beyond a moderate window size.

\begin{figure}[h]
    \centering
    \includegraphics[width=0.95\linewidth]{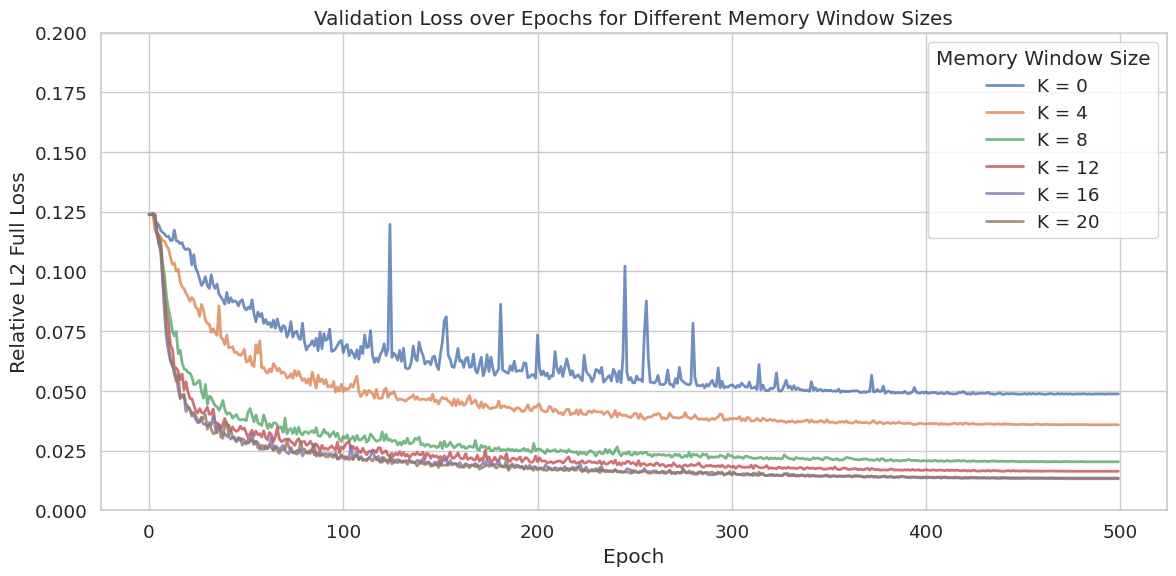}
    \caption{Validation $\ell_2$ loss over training epochs for different memory window sizes $K$ on the KS dataset ($\nu = 0.075$). Increasing the window improves performance until around $K=8$, after which gains diminish.}
    \label{fig:window_size}
\end{figure}

\begin{figure}[h]
    \centering
    \includegraphics[width=0.95\linewidth]{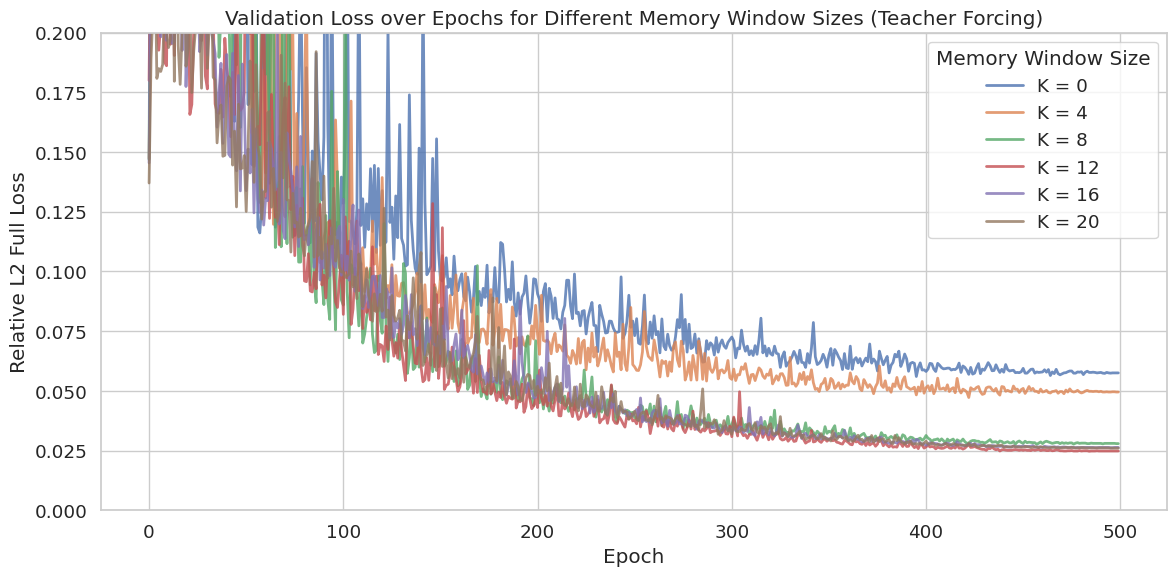}
    \caption{Validation $\ell_2$ loss over training epochs on the KS dataset ($\nu = 0.075$) using teacher forcing for various memory window sizes $K$. Compared to Figure~\ref{fig:window_size}, training is less stable and results in higher loss.}
    \label{fig:teacher_forcing}
\end{figure}

\subsection{Effect of Teacher Forcing on SS-NO Performance}
\label{app:teacherforcing}

\paragraph{Case Study: 1D KS with $\nu = 0.075$ at Low Resolution.} 
While prior work such as FFNO \citep{tranfactorized} suggests that teacher forcing can improve model performance by stabilizing training, we observe a different trend for our SS-NO architecture, particularly in low-resolution regimes. To investigate this, we revisit the 1D Kuramoto–Sivashinsky (KS) dataset with $\nu = 0.075$, where reduced spatial resolution poses a significant challenge.

We train SS-NO with teacher forcing across various memory window sizes ($K = 0$ to $20$), and present the results in Figure~\ref{fig:teacher_forcing}. Compared to the non-teacher-forced variant (Figure~\ref{fig:window_size}), training with teacher forcing leads to noticeably less stable optimization, slower convergence, and consistently higher final relative $\ell_2$ errors. Moreover, while performance still improves with longer memory, the gains plateau earlier, and the benefits beyond $K = 8$ diminish more rapidly than in the non-teacher-forced case.

These findings suggest that in this setting, SS-NO benefits more from fully autoregressive training, likely due to its strong inductive bias for modeling long-term temporal dependencies without relying on ground-truth guidance.

\paragraph{General Trends Across 1D and 2D Benchmarks.}
To better understand the broader impact of teacher forcing, we compare SS-NO models trained with and without teacher forcing across all 1D and 2D benchmarks (Tables~\ref{tab:ablation_1d} and~\ref{tab:ablation_2d}).

In 1D settings, results are mixed: at higher resolutions (e.g., 128), teacher forcing slightly improves performance in some cases (e.g., KS with $\nu = 0.1$ and $\nu = 0.125$), but at lower resolutions, non-teacher-forced models generally perform better—especially in the challenging $\nu = 0.075$ KS setup. These results suggest that autoregressive training may confer more robustness in low-resolution or harder regimes, where model predictions diverge more easily from ground truth.

In contrast, across all 2D Navier–Stokes benchmarks, teacher forcing consistently yields better performance. This suggests that for higher-dimensional systems with complex spatiotemporal interactions, teacher forcing can help stabilize training and guide the model toward more accurate trajectories.

Overall, while the effect of teacher forcing appears to be dataset- and resolution-dependent, we find that non-teacher-forced training can be highly effective in 1D regimes, particularly under low resolution. In contrast, for complex 2D flows, teacher forcing remains a valuable tool for improving predictive accuracy.

\begin{table}[h]
\centering
\caption{Relative $\ell_2$ error on 1D benchmarks at varying resolutions.}
\label{tab:ablation_1d}
\begin{tabular}{cc cccc}
\toprule
\textbf{Resolution} & \textbf{Architecture} & \multicolumn{4}{c}{\textbf{Relative $\ell_2$ Error}} \\
\cmidrule[\heavyrulewidth](lr){3-6}
& & & \multicolumn{1}{c}{\texttt{KS}} & & \texttt{Burgers'} \\
\cmidrule(lr){3-5} \cmidrule(lr){6-6}
& & $\nu = 0.075$ & $\nu = 0.1$ & $\nu = 0.125$ & $\nu = 0.001$ \\
\midrule

\multirow{2}{*}{128} 
& SS-NO (Teacher Forcing)              & $0.0086$ & $\mathbf{0.0027}$ & $\mathbf{0.0013}$ & $\mathbf{0.0070}$ \\
& SS-NO (No Teacher Forcing)        & $\mathbf{0.0076}$ & $0.0034$ & $0.0019$ & $\mathbf{0.0070}$ \\

\midrule

\multirow{2}{*}{64} 
& SS-NO (Teacher Forcing)               & $0.0143$ & $0.0048$ & $0.0029$ & $0.0121$ \\
& SS-NO (No Teacher Forcing)        & $\mathbf{0.0135}$ & $\mathbf{0.0042}$ & $\mathbf{0.0023}$ & $\mathbf{0.0114}$ \\

\midrule

\multirow{2}{*}{32} 
& SS-NO (Teacher Forcing)              & $0.0472$ & $0.0162$ & $0.0088$ & $0.0200$ \\
& SS-NO (No Teacher Forcing)        & $\mathbf{0.0358}$ & $\mathbf{0.0135}$ & $\mathbf{0.0071}$ & $\mathbf{0.0171}$ \\

\bottomrule
\end{tabular}
\end{table}

\begin{table}[h]
\centering
\caption{Relative $\ell_2$ error on 2D Navier--Stokes datasets. All models are evaluated at a spatial resolution of $64 \times 64$.}
\label{tab:ablation_2d}
\begin{tabular}{lcccc}
\toprule
\textbf{Architecture} & \multicolumn{3}{c}{\textbf{Relative $\ell_2$ Error}} \\
\cmidrule[\heavyrulewidth](lr){2-4}
& \texttt{TorusLi} & \texttt{TorusVis} & \texttt{TorusVisForce} \\
\midrule
SS-NO (Teacher Forcing) & $\mathbf{0.0345}$ & $\mathbf{0.0218}$ & $\mathbf{0.0263}$ \\
SS-NO (No Teacher Forcing) & $0.0546$ & $0.0385$ & $0.0371$ \\
\bottomrule
\end{tabular}
\end{table}

\begin{figure}[h]
\centering

\begin{minipage}[t]{0.48\linewidth}
\begin{lstlisting}[language=Python, caption={SS-NO-VM (Vision Mamba style)}, label={fig:ssm_vm}]
# SS-NO-VM (Vision Mamba Style)
x_flat = flatten_spatial(x)

# First zigzag sweep
x_fwd = SSM_flat_forward(x_flat)
x_bwd = SSM_flat_backward(reverse(x_flat))
z1 = permute_channels(x_fwd + x_bwd)
x = linearT(z1) + x_flat

# Second zigzag sweep
x_fwd2 = SSM_flat_forward(x)
x_bwd2 = SSM_flat_backward(reverse(x))
z2 = permute_channels(x_fwd2 + x_bwd2)
x = linearT2(z2) + x

x_out = reshape_spatial(x)
\end{lstlisting}
\end{minipage}
\hfill
\begin{minipage}[t]{0.48\linewidth}
\begin{lstlisting}[language=Python, caption={SS-NO-FF (FFNO-style connection)}, label={fig:ssm_ff}]
# SS-NO-FF (FFNO-style)
residual = x

# X-axis sweep
x1 = reshape_for_x_sweep(x)
x1_fwd = SSM_x_forward(x1)
x1_bwd = reverse(SSM_x_backward(reverse(x1)))
x1_combined = reshape_back(x1_fwd + x1_bwd)

# Y-axis sweep
y1 = reshape_for_y_sweep(x)
y1_fwd = SSM_y_forward(y1)
y1_bwd = reverse(SSM_y_backward(reverse(y1)))
y1_combined = reshape_back(y1_fwd + y1_bwd)

# Combine and project
z = x1_combined + y1_combined
z = backcast_ff(z)
output = z + residual
\end{lstlisting}
\end{minipage}%
\caption{Pseudocode for two spatial architectural variants of SS-NO: Vision Mamba-style (left) and  FFNO-style (right).}
\label{fig:factorized_pseudocodes}
\end{figure}

\section{Comparison of Factorized Architectural Variants}
\label{app:factorized}

In this section, we explore how different architectural factorization strategies affect performance on the \texttt{TorusLi} dataset. Specifically, we examine two popular alternatives to our default SS-NO spatial block.

\paragraph{SS-NO-VM (Vision Mamba style).}
This variant adapts the Vision Mamba architecture by replacing the Mamba core with our spatial SSM block. The key idea is to preserve Vision Mamba's flattened zigzag scanning structure in 2D, allowing bidirectional processing over a 1D sequence obtained from flattening the spatial grid. After bidirectional processing over the first spatial axis, the result is passed through a mixing linear projection before a second pass along the alternate axis, again using bidirectional spatial SSMs and linear mixing. This model maintains Vision Mamba’s emphasis on alternating directional fusion while leveraging the structure of our SSM.

\paragraph{SS-NO-FF (FFNO style).}
In this variant, we discard the original spatial block of SS-NO and instead adopt the FFNO-style connection, where spatial context is aggregated through two 1D sweeps: one across the $x$-axis and one across the $y$-axis. Each sweep involves a forward and backward spatial SSM applied independently along that axis, and the outputs from both axes are summed together. This approach highlights the role of directional decoupling in factorized Fourier models and contrasts with the fused zigzag sweep in SS-NO-VM.

Pseudocode for both SS-NO-FF and SS-NO-VM variants is provided in Figure~\ref{fig:factorized_pseudocodes}.

\paragraph{FNO2D-R (Reduced Fourier Kernel).} To verify that the gains of SS-NO are not trivially due to kernel simplification, we construct a reduced FNO2D model, denoted as FNO2D-R. In this variant, we remove the output channel mixing in the Fourier kernel by modifying the kernel from:
\begin{verbatim}
self.weights = nn.Parameter(self.scale * torch.rand(
    in_channels, out_channels, modes1, modes2, dtype=torch.cfloat))
...
return torch.einsum("bixy,ioxy->boxy", input, weights)
\end{verbatim}
to:
\begin{verbatim}
self.weights = nn.Parameter(self.scale * torch.rand(
    in_channels, modes1, modes2, dtype=torch.cfloat))
...
return input * weights
\end{verbatim}
This isolates the contribution of full channel-wise Fourier mixing and allows for a more controlled comparison against SS-NO.

As shown in Table~\ref{tab:factorized_comparison}, our original SS-NO achieves the best performance among all tested variants, demonstrating that both our architectural choices and structured state-space modeling contribute meaningfully to the observed improvements.

\begin{table}[h]
\centering
\caption{Relative $\ell_2$ error on the \texttt{TorusLi} dataset for different factorized architectural variants. All models are evaluated at a spatial resolution of $64 \times 64$.}
\label{tab:factorized_comparison}
\begin{tabular}{lcc}
\toprule
\textbf{Architecture} & \textbf{\# Parameters} & \textbf{Relative $\ell_2$ Error} \\
\cmidrule[\heavyrulewidth]{3-3}
& & \texttt{TorusLi} \\
\midrule
SS-NO-VM          & $402{,}945$ & $0.0495$ \\
SS-NO-FF          & $503{,}297$ & $0.0403$ \\
FNO2D-R            & $1,115,841$ & $0.0718$ \\
SS-NO (ours)      & $369{,}665$ & $\mathbf{0.0345}$ \\
\bottomrule
\end{tabular}
\end{table}

\section{Data Preprocessing and Training Details}
\label{app:training}

\subsection{Data Preprocessing and Training Setup}
We normalize all input data to the range $[0, 1]$ and add fixed-variance Gaussian noise during training. Through empirical validation, we found $\sigma = 0.005$ optimal for Burgers' equation to stabilize training while preserving signal integrity, while $\sigma = 0.001$ works best for all other datasets. All models are trained for 500 epochs using the AdamW optimizer with an initial learning rate of $10^{-3}$, weight decay of $10^{-4}$, and a cosine annealing learning rate schedule. We employ teacher forcing during training unless otherwise noted.

To simulate low-resolution scenarios, we generate downsampled versions of the original datasets by uniformly subsampling in space. All models are trained to minimize the normalized step-wise relative $\ell_2$ loss, and evaluation is reported using the full-sequence relative $\ell_2$ error.

\subsection{Model Architecture Specifications}
All models use four blocks with a consistent hidden dimension of 64 across both 1D and 2D problems. Intermediate and output linear layers maintain a hidden size of 128. With the exception of FFNO and our SS-NO on 1D problems, we adopt the hyperparameter settings reported by \citet{buitragobenefits}. Unlike their setup which uses hidden dimension 128 for 1D and 64 for 2D problems, we found no statistical difference in performance with a unified hidden dimension of 64, simplifying architecture design while maintaining competitive results.

All models except GKT follow the MemNO framework and incorporate a temporal S4 module with window size $K=4$ positioned in the middle of the network stack. For GKT, we use a multi-input variant where the temporal dimension is mixed with features.

We adopt a simple spatial positional encoding scheme shared across all models. In 1D, for a grid with $f$ equispaced points over the interval $[0, L]$, the positional encoding $E \in \mathbb{R}^f$ is defined as $E_i = \frac{i}{L}$ for $0 \leq i < f$. In 2D, for a $f \times f$ grid over $[0, L_x] \times [0, L_y]$, the encoding is $E_{ij} = (\frac{i}{L_x}, \frac{j}{L_y})$. The input lifting operator $r_{\text{in}}$ is a shared linear layer that maps the concatenation of input features and positional encoding to the hidden space, applied pointwise across the spatial grid.

\subsection{Baseline Model Architectures}

Unless otherwise specified, all models use four blocks with a channel width of $64$. Intermediate and output linear layers have a hidden size of $128$ (i.e., twice the channel width). \textbf{With the exception of FFNO, multi-input FNO, and our SS-NO on 1D problems, we adopt the “optimal” hyperparameter settings (i.e., number of layers, hidden dimension, and expansion size of linear layers) reported by \citet{buitragobenefits}.} The U-Net we use is even larger, containing more parameters than this standard baseline. For memory-augmented architectures, the memory module is inserted after the first two blocks and before the last two. For all non-Markovian models, we fix the memory window size to $K=4$ across experiments; the impact of varying $K$ on SS-NO is explored in Appendix~\ref{app:window}.

We adopt a simple spatial positional encoding scheme shared across all models. In 1D, for a grid with $f$ equispaced points over the interval $[0, L]$, the positional encoding $E \in \mathbb{R}^f$ is defined as $E_i = \frac{i}{L}$ for $0 \leq i < f$. In 2D, for a $f \times f$ grid over $[0, L_x] \times [0, L_y]$, the encoding is $E_{ij} = (\frac{i}{L_x}, \frac{j}{L_y})$. The input lifting operator $r_{\text{in}}$ is a shared linear layer that maps the concatenation of input features and positional encoding from $\mathbb{R}^{2+k}$, where $k$ is the number of features, to the hidden space $\mathbb{R}^h$, applied pointwise across the spatial grid. Similarly, a shared decoder $r_{\text{out}}$ maps from $\mathbb{R}^h$ back to $\mathbb{R}$. For all models containing an FNO or FFNO module, we follow the setup in \citet{buitragobenefits} and retain all available Fourier modes by setting $k_{\text{max}} = \lfloor \frac{f}{2} \rfloor$, where $f$ denotes the spatial resolution. A notable exception is the \texttt{TorusLi} dataset, where an 8-mode FNO configuration outperformed higher mode counts (16, 24, and 32). We therefore present the results based on 8 modes for this specific case.

\paragraph{Factorized Fourier Neural Operator (FFNO).}  
The Factorized Fourier Neural Operator (FFNO), introduced by \citet{tranfactorized}, extends the original Fourier Neural Operator (FNO) \citep{lifourier} by modifying how the spectral integral kernel is applied. We use the standard FFNO implementation from \url{https://github.com/alasdairtran/fourierflow/}. Each FFNO layer operates on a spatial grid of size $|S|$ and a hidden dimension $h$, and is defined as a residual block:
\[
\ell(v) = v + \text{Linear}_{h \to h'} \circ \sigma \circ \text{Linear}_{h' \to h} \circ \mathcal{K}(v),
\]
where $\sigma$ denotes the ReLU activation function \citep{nair2010rectified}, and $h'$ is an intermediate dimensionality used within the nonlinear mapping. The integral operator $\mathcal{K}$ transforms $v$ in the Fourier domain and is computed by aggregating across spatial dimensions:
\[
\mathcal{K}(v) = \sum_{\alpha=1}^d \text{IFFT}_\alpha \left[ R_\alpha \cdot \text{FFT}_\alpha(v) \right],
\]
where $\text{FFT}_\alpha$ and $\text{IFFT}_\alpha$ are the (inverse) discrete Fourier transforms applied along the $\alpha$-th spatial axis, and $R_\alpha \in \mathbb{C}^{h^2 \times k_{\text{max}}}$ are learned complex-valued projection matrices for each dimension. 

\paragraph{Fourier Neural Operator (FNO).}
For 2D problems, we use a modified version of the original Fourier Neural Operator (FNO) \citep{lifourier} where the output of the integral kernel is passed through a nonlinearity before the residual skip connection, following the implementation at \url{https://github.com/lilux618/fourier_neural_operator/blob/master/fourier_2d_time.py}, which we found to perform better.

\paragraph{Galerkin Transformer (GKT).}
We use the Galerkin Transformer (GKT) implementation from \url{https://github.com/scaomath/galerkin-transformer}. We employ a multi-input variant where the model is made non-Markovian by providing the last $K=4$ steps as input and processing the temporal dimension as normal channels concatenated with other features, rather than as an independent dimension. We use a hidden size of 32 with dropout rates of 0.05 in attention layers and 0.025 in feedforward layers.

\paragraph{Factformer 2D.} As part of our evaluation, we include the original Factformer model from \citet{li2023scalable}, which is designed for spatiotemporal modeling over 2D spatial domains. While \citet{buitragobenefits} only evaluated the 1D variant, we adopt the same architectural configuration for the 2D case to ensure fair comparison: four attention layers, a hidden dimension of 64, and 4 attention heads with a total projection dimension of 512. Factformer 2D applies linear attention sequentially along each spatial axis. Given a hidden tensor $w \in \mathbb{R}^{S_x \times S_y \times H}$, two separate MLPs (MLP$_x$ and MLP$_y$) are applied to compute keys and queries across $S_x$ and $S_y$, respectively. After averaging over the complementary axis, we obtain $q_x, k_x \in \mathbb{R}^{S_x \times H}$ and $q_y, k_y \in \mathbb{R}^{S_y \times H}$. Values $v \in \mathbb{R}^{S_x \times S_y \times H}$ are computed through a shared linear projection, and attention is applied first along $x$ and then $y$. Our implementation follows the original GitHub repository \url{https://github.com/BaratiLab/FactFormer} with minimal changes limited to data format compatibility.

\paragraph{Factformer 1D.} We also evaluate the 1D version of Factformer, as introduced in \citet{buitragobenefits}, using the same configuration of four attention layers, hidden dimension 64, and 4 attention heads (total projection dimension 512). Unlike the 2D case, the 1D variant processes inputs over a single spatial dimension. As such, only one MLP (MLP$_x$) is used to compute queries and keys, and no spatial averaging is applied. A single linear attention operation is performed per layer along the 1D spatial axis. Values are projected from the input using a linear layer. This model is implemented within the same codebase as the 2D version to maintain consistency, with minor modifications to handle 1D inputs.

\paragraph{U-Net Neural Operator (U-Net).} Our U-Net implementation follows the typical encoder-decoder architecture with skip connections \citep{guptatowards}. It consists of four downsampling convolutional blocks, a bottleneck convolutional block, and four upsampling blocks with residual connections linking corresponding encoder and decoder layers. We use a first hidden dimension of 32, and apply channel multipliers of $[1, 2, 4, 8]$ in the encoder path. No time embeddings are used. This setup is adapted to process spatiotemporal data by flattening the spatial dimensions and independently applying the network to each time step. Our implementation is based on the publicly available codebase from PDEBench \url{https://github.com/pdebench/PDEBench} \citep{takamoto2022pdebench}. For reference, the U-Net Neural Operator variant used in prior work by \citet{buitragobenefits} employs a similar architecture but with channel multipliers of $[1, 2, 2, 2]$.

\section{Data Generation}
\label{app:data}
\subsection{1D Burgers’ Equation}

We consider the one-dimensional viscous Burgers’ equation, expressed as:
\[
\partial_t u + u \, \partial_x u = \nu \, \partial_{xx} u, \quad (t, x) \in [0, T] \times [0, L],
\]
where $\nu$ is the viscosity coefficient. For our experiments, we utilize the publicly available 1D Burgers’ dataset from PDEBench~\citep{takamoto2022pdebench}, with viscosity $\nu = 0.001$. The original dataset contains $10{,}000$ spatiotemporal samples, each defined on a spatial grid of $1,024$ points and $201$ time steps over the interval $[0, 2.01]$ seconds. We restrict our usage to the first $140$ time steps and uniformly downsample them to obtain $20$ steps spanning up to $1.4$ seconds. This truncation is motivated by the observation that, after this point, the dissipative effect of the diffusion term $\partial_{xx} u$ suppresses high-frequency dynamics, causing the solution to evolve slowly \citep{buitragobenefits}. For training, we use the first $2{,}048$ samples from the dataset and reserve the last $1{,}000$ samples for testing.

\subsection{1D Kuramoto–Sivashinsky Equation}

The Kuramoto–Sivashinsky (KS) equation is given by:
\[
\partial_t u + u \, \partial_x u + \partial_{xx} u + \nu \, \partial_{xxxx} u = 0, \quad (t, x) \in [0, T] \times [0, L],
\]
with periodic boundary conditions. We follow the exact same setting as provided by \citet{buitragobenefits}, using their public repository at \url{https://github.com/r-buitrago/LPSDA}, which builds upon the implementation of \citet{brandstetter2022lie}. The spatial domain $[0, 64]$ is discretized into $512$ points, and the temporal domain $[0, 2.5]$ is divided into $26$ equispaced time steps with fixed $\Delta t = 0.1$.

Initial conditions are generated as random superpositions of sine waves:
\[
u_0(x) = \sum_{i=0}^{20} A_i \sin\left(\frac{2\pi k_i}{L} x + \phi_i\right),
\]
where for each trajectory, the amplitudes $A_i$ are sampled from a continuous uniform distribution on $[-0.5, 0.5]$, the frequencies $k_i$ are drawn from a discrete uniform distribution over $\{1, 2, \dots, 8\}$, and the phases $\phi_i$ are sampled uniformly from $[0, 2\pi]$.

We consider three different viscosities: $\nu = 0.075$, $0.1$, and $0.125$. For each viscosity, we generate $2,048$ training samples and $256$ validation samples.

\subsection{2D Navier Stokes Equations}
\paragraph{The \texttt{TorusLi} Dataset.} 
We consider the two-dimensional incompressible Navier--Stokes equations on the unit torus $\mathbb{T}^2 = [0, 1]^2$ in vorticity form. The evolution of the scalar vorticity field $\omega(x, y, t)$ is governed by:
\[
\partial_t \omega + \mathbf{u} \cdot \nabla \omega = \nu \Delta \omega + f,
\]
where $\mathbf{u} = (u, v)$ is the velocity field satisfying the incompressibility condition $\nabla \cdot \mathbf{u} = 0$, $\nu > 0$ is the kinematic viscosity, and $f$ is a fixed external forcing function.

We directly reuse the dataset released by \citet{lifourier}, referred to as \texttt{TorusLi}, which was originally developed to benchmark the FNO. The simulations were generated using a pseudospectral Crank–Nicolson second-order time-stepping scheme on a high-resolution $256 \times 256$ grid, and subsequently downsampled to $64 \times 64$. All trajectories use a constant viscosity of $\nu = 10^{-5}$ (corresponding to a Reynolds number $\mathrm{Re} = 2000$), and share the same external forcing:
\[
f(x, y) = 0.1 \left[ \sin(2\pi(x + y)) + \cos(2\pi(x + y)) \right].
\]
Initial vorticity fields $\omega_0$ are sampled from a Gaussian random field:
\[
\omega_0 \sim \mathcal{N} \left(0, 7^{3/2} (-\Delta + 49 I)^{-2.5} \right),
\]
with periodic boundary conditions. The numerical solver computes the velocity field by solving a Poisson equation in Fourier space and evaluates nonlinear terms in physical space with dealiasing applied. The nonlinear term is handled explicitly in the Crank–Nicolson scheme.

The dataset consists of solutions recorded every $t = 1$ time unit, with a total of 20 time steps per trajectory, corresponding to a final time horizon of $T = 20$. This makes the task relatively long-range compared to other PDE benchmarks. The spatial resolution is fixed at $64 \times 64$ for all experiments in this paper.

\paragraph{The \texttt{TorusVis} and \texttt{TorusVisForce} Datasets.}

We utilize two additional datasets, \texttt{TorusVis} and \texttt{TorusVisForce}, introduced by \citet{tranfactorized}, to evaluate model generalization under varying physical regimes. Both datasets are generated using the same Crank–Nicolson pseudospectral solver used in \texttt{TorusLi}, maintaining consistency in numerical methodology.

These datasets extend the Navier--Stokes setting by incorporating variability in viscosity and external forcing. Specifically, each trajectory uses a randomly sampled viscosity $\nu$ between $10^{-5}$ and $10^{-4}$. The external forcing function is defined as:
\[
f(t, x, y) = 0.1 \sum_{p=1}^{2} \sum_{i=0}^{1} \sum_{j=0}^{1} \left[ \alpha_{pij} \sin\left(2\pi p(i x + j y) + \delta t\right) + \beta_{pij} \cos\left(2\pi p(i x + j y) + \delta t\right) \right],
\]
where $\alpha_{pij}, \beta_{pij} \sim \mathcal{U}[0, 1]$ are sampled independently for each trajectory. The parameter $\delta$ controls the temporal variation in the forcing: for \texttt{TorusVis}, $\delta = 0$, resulting in time-invariant forcing; for \texttt{TorusVisForce}, $\delta = 0.2$, producing a time-varying force.

As with \texttt{TorusLi}, the spatial resolution is fixed at $64 \times 64$, and trajectories consist of 20 time steps sampled every $t = 1$ time unit, yielding a total time horizon of $T = 20$.

\subsection{2D Richtmeyer--Meshkov (CE-RM) Problem}

We consider the 2D \textbf{Richtmeyer--Meshkov (CE-RM)} benchmark for the compressible Euler equations ($\gamma=1.4$) on the unit square $[0,1]^2$ with periodic boundary conditions. The initial state contains a high-pressure circular region and a heavy fluid on one side of a perturbed interface~\citep{herde2024poseidon}. Specifically, the initial pressure and density are given by
\begin{align}
p_0(x,y) &= 
\begin{cases}
20, & \sqrt{x^2+y^2}<0.1, \\
1, & \text{otherwise},
\end{cases} \\
\rho_0(x,y) &= 
\begin{cases}
2, & |x| < I(x,y,\omega), \\
1, & \text{otherwise},
\end{cases}
\end{align}
with initial velocities $v_x = v_y = 0$. The interface $I(x,y,\omega)$ is perturbed by a random Fourier series:
\begin{equation}
I(x,y,\omega) = 0.25 + \epsilon \sum_{j=1}^{10} a_j(\omega)\,\sin\!\bigl(2\pi((x,y)\cdot(1,0) + b_j(\omega))\bigr),
\end{equation}
where $a_j$ and $b_j$ are independent uniform random amplitudes and phases (with $\sum_j a_j=1$). We integrate the Euler equations up to time $T=2$, saving 21 equally spaced snapshots in time.

The publicly released \textbf{CE-RM dataset}~\citep{herde2024poseidon} contains 1260 trajectories on a $128\times 128$ grid, which we spatially downsample by a factor of two to $64\times 64$. Each snapshot includes five fields (density $\rho$, velocity $v_x,v_y$, pressure $p$, and a passive tracer). In our setup, the passive tracer is assumed available as an input at every time step, while the model predicts only the conserved variables $[\rho, v_x, v_y, p]$. We follow the dataset split of \citet{herde2024poseidon}, using the first 1000 trajectories for training and the last 200 for validation. We further apply temporal downsampling by a factor of 2: we take the solution at times indexed $t=0,2,4,6$ (the first four snapshots) as input and predict the next 7 steps at indices $8,10,\dots,20$. In other words, given the fields at time steps $t_0,t_2,t_4,t_6$, the model predicts the fields at $t_8,\dots,t_{20}$.

\subsection{2D Gravitational Rayleigh--Taylor (GCE-RT) Problem}

The \textbf{GCE-RT} problem adds gravitational forcing to the compressible Euler equations on $[0,1]^2$ with periodic boundaries. We use the two-dimensional Rayleigh--Taylor setup from astrophysics: a $\gamma=2$ polytropic equilibrium on a model neutron star is perturbed at a random interface~\citep{herde2024poseidon}. In cylindrical symmetry ($r=\sqrt{x^2+y^2}$), the unperturbed pressure and gravitational potential are
\begin{align}
p(r) &= K_0\!\left(\frac{\rho_0\sin(\alpha r)}{\alpha r}\right)^2, \\
\phi(r) &= -2K_0\frac{\rho_0\sin(\alpha r)}{\alpha r},
\end{align}
with $K_0 = p_0/\rho_0^2$ and $\alpha=\sqrt{4\pi G/(2K_0)}$ (with $G=1$). The initial density profile is
\begin{equation}
\rho(r) = \sqrt{\frac{K_0}{\tilde K(r)}} \frac{\rho_0 \sin(\alpha r)}{\alpha r},
\end{equation}
where $\tilde K(r)=K_0$ for $r<r_{RT}$ and $\tilde K(r)=(1 - A/(1+A))^2K_0$ for $r\ge r_{RT}$. The Rayleigh--Taylor interface radius is given by
\begin{equation}
r_{RT}(x,y) = 0.25\left(1 + a\cos(\operatorname{atan2}(y,x) + b)\right),
\end{equation}
with random amplitude $a\in[-1,1]$ and phase $b\in[-\pi,\pi]$. We also perturb the central density $\rho_0$, pressure $p_0$, and Atwood number $A$ via
\begin{align}
\rho_0 &= 1+0.2c, \\
p_0 &= 1+0.2d, \\
A &= 0.05(1+0.2e),
\end{align}
with $c,d,e\sim\mathcal U[-1,1]$. Initial velocity is set to zero. We evolve this setup to $T=5$ and save 11 snapshots (every $\Delta t=0.5$).

The \textbf{GCE-RT dataset}~\citep{herde2024poseidon} likewise contains 1260 trajectories on a $128\times 128$ grid, which we spatially downsample to $64\times64$. Each snapshot includes six fields ($\rho$, $v_x$, $v_y$, $p$, a tracer, and the gravitational potential $\phi$). In our setup, the passive tracer and gravitational potential are assumed available as inputs at every time step, while the model predicts only $[\rho, v_x, v_y, p]$. We follow the dataset split of \citet{herde2024poseidon}, using the first 1000 trajectories for training and the last 200 for validation. Since the raw data has 11 time frames, we take indices 0--3 as input and predict indices 4--10 (i.e.\ given the first 4 snapshots we predict the next 7).

\section{Pseudocode of 2D SS-NO}
\label{app:code}
\begin{lstlisting}[language=Python, caption=SS-NO pseudocode, label=lst:stssm]
class SS-NO(nn.Module):
    '''
    Notation:
        B: batch size
        T: temporal length
        X, Y: spatial dimensions
        C: input channels
        H: hidden dimension
    '''

    def __init__(self, 
                 lifting_layer: nn.Module,
                 projection_layer: nn.Module,
                 memory_pre_forward_x: nn.Module,
                 memory_pre_backward_x: nn.Module,
                 memory_pre_forward_y: nn.Module,
                 memory_pre_backward_y: nn.Module,
                 memory_t: nn.Module,
                 memory_post_forward_x: nn.Module,
                 memory_post_backward_x: nn.Module,
                 memory_post_forward_y: nn.Module,
                 memory_post_backward_y: nn.Module):
        super().__init__()
        self.p = lifting_layer
        self.q = projection_layer
        self.memory_pre_forward_x = memory_pre_forward_x
        self.memory_pre_backward_x = memory_pre_backward_x
        self.memory_pre_forward_y = memory_pre_forward_y
        self.memory_pre_backward_y = memory_pre_backward_y
        self.memory_t = memory_t
        self.memory_post_forward_x = memory_post_forward_x
        self.memory_post_backward_x = memory_post_backward_x
        self.memory_post_forward_y = memory_post_forward_y
        self.memory_post_backward_y = memory_post_backward_y

    def forward(self, x: Tensor) -> Tensor:
        '''
        Args:
            x: Input sequence of states (B, C, X, Y, T)
        Returns:
            Predicted next state (B, X, Y, 1)
        '''
        
        if self.training:
            x = x + torch.randn_like(x) * noise

        x = rearrange(x, 'b c x y t -> (b t) x y c')
        x = self.p(x)
        x = rearrange(x, '(b t) x y h -> (b t) h x y', t=T)

        # --- Pre-memory spatial SSM ---
        x = rearrange(x, '(b t) h x y -> (b t y) h x', t=T)
        x_fwd = self.memory_pre_forward_x(x)[0]
        x_bwd = torch.flip(self.memory_pre_backward_x(torch.flip(x, dims=[-1]))[0], dims=[-1])
        x = x_fwd + x_bwd
        x = rearrange(x, '(b t y) h x -> (b t) x h y', t=T)

        x = rearrange(x, '(b t) x h y -> (b t x) h y', t=T)
        y_fwd = self.memory_pre_forward_y(x)[0]
        y_bwd = torch.flip(self.memory_pre_backward_y(torch.flip(x, dims=[-1]))[0], dims=[-1])
        x = y_fwd + y_bwd
        x = rearrange(x, '(b t x) h y -> (b t) h x y', t=T)

        # --- Temporal SSM ---
        x = rearrange(x, '(b t) h x y -> b x y h t', t=T)
        x = rearrange(x, 'b x y h t -> (b x y) h t')
        x = self.memory_t(x)[0]
        x = rearrange(x, '(b x y) h t -> b x y h t', x=X, y=Y)
        x = rearrange(x, 'b x y h t -> (b t) h x y', t=T)

        # --- Post-memory spatial SSM ---
        x = rearrange(x, '(b t) h x y -> (b t y) h x', t=T)
        x_fwd = self.memory_post_forward_x(x)[0]
        x_bwd = torch.flip(self.memory_post_backward_x(torch.flip(x, dims=[-1]))[0], dims=[-1])
        x = x_fwd + x_bwd
        x = rearrange(x, '(b t y) h x -> (b t) x h y', t=T)

        x = rearrange(x, '(b t) x h y -> (b t x) h y', t=T)
        y_fwd = self.memory_post_forward_y(x)[0]
        y_bwd = torch.flip(self.memory_post_backward_y(torch.flip(x, dims=[-1]))[0], dims=[-1])
        x = y_fwd + y_bwd
        x = rearrange(x, '(b t x) h y -> (b t) h x y', t=T)

        x = self.q(x)
        x = rearrange(x, '(b t) c x y -> b x y t c', t=T)

        return x
\end{lstlisting}

\section{Computational Efficiency and Cost-Accuracy Analysis} \label{app:efficiency}

To rigorously evaluate the practical deployment viability of the proposed method, we conducted a comprehensive analysis of the trade-offs between predictive accuracy, theoretical complexity, and real-world inference latency.

\subsection{Benchmarking Methodology} All benchmarks were performed on a single NVIDIA A6000 GPU with a batch size of 1 to simulate real-time inference conditions. We report two key metrics across spatial resolutions $N \in \{32,64,128,256,512\}$: \begin{itemize} \item \textbf{Theoretical Complexity (GFLOPs):} The total number of floating-point operations (in billions) required for a single forward pass, calculated using the \texttt{thop} profiler with custom handlers for spectral operations. \item \textbf{Inference Latency (Time):} The wall-clock time in milliseconds, averaged over 100 iterations following a GPU warmup phase. \end{itemize}

\subsection{Results and Discussion} The results are summarized in Table~\ref{tab:cost_summary_full} and visualized as Pareto frontiers in Figure~\ref{fig:cost_accuracy}.

\textbf{State-Space Neural Operator (SS-NO).} Our model demonstrates the most favorable cost-accuracy profile. It achieves the lowest inference latency across all resolutions, consistently clocking under 0.85 ms. Notably, its runtime is effectively constant with respect to resolution in this regime, as the computation is dominated by fixed kernel launch overheads and the highly parallelizable associative scan of the S4 backbone.

\textbf{Galerkin Transformer (GKT).} We acknowledge the theoretical strength of the Galerkin Transformer, which exhibits the lowest GFLOP count of all models (0.03 GFLOPs at N=512). This efficiency stems from its linear attention mechanism, which avoids the quadratic complexity of standard Transformers. However, theoretical efficiency does not translate directly to wall-clock speed; GKT's inference latency ($\approx$2.26 ms) is nearly 3$\times$ higher than SS-NO due to the memory-bound nature of attention layers and the overhead of frequent reshaping operations.

\textbf{FFNO and U-Net.} The FFNO remains a strong competitor, offering low theoretical complexity (0.12 GFLOPs at N=512) and respectable runtime ($\approx$1.39 ms). However, SS-NO outperforms it in both speed ($\approx$1.7$\times$ faster) and accuracy. The U-Net and FactFormer architectures are significantly more expensive, with the U-Net requiring an order of magnitude more FLOPs (1.41 GFLOPs) for comparable or worse accuracy, highlighting the inefficiency of dense convolutional encoders for this class of problems.

In summary, while models like GKT and FFNO offer specific theoretical advantages, SS-NO achieves the optimal practical balance, delivering the highest accuracy with the lowest real-world latency.

\begin{figure}[t]
    \centering
    \includegraphics[width=\linewidth]{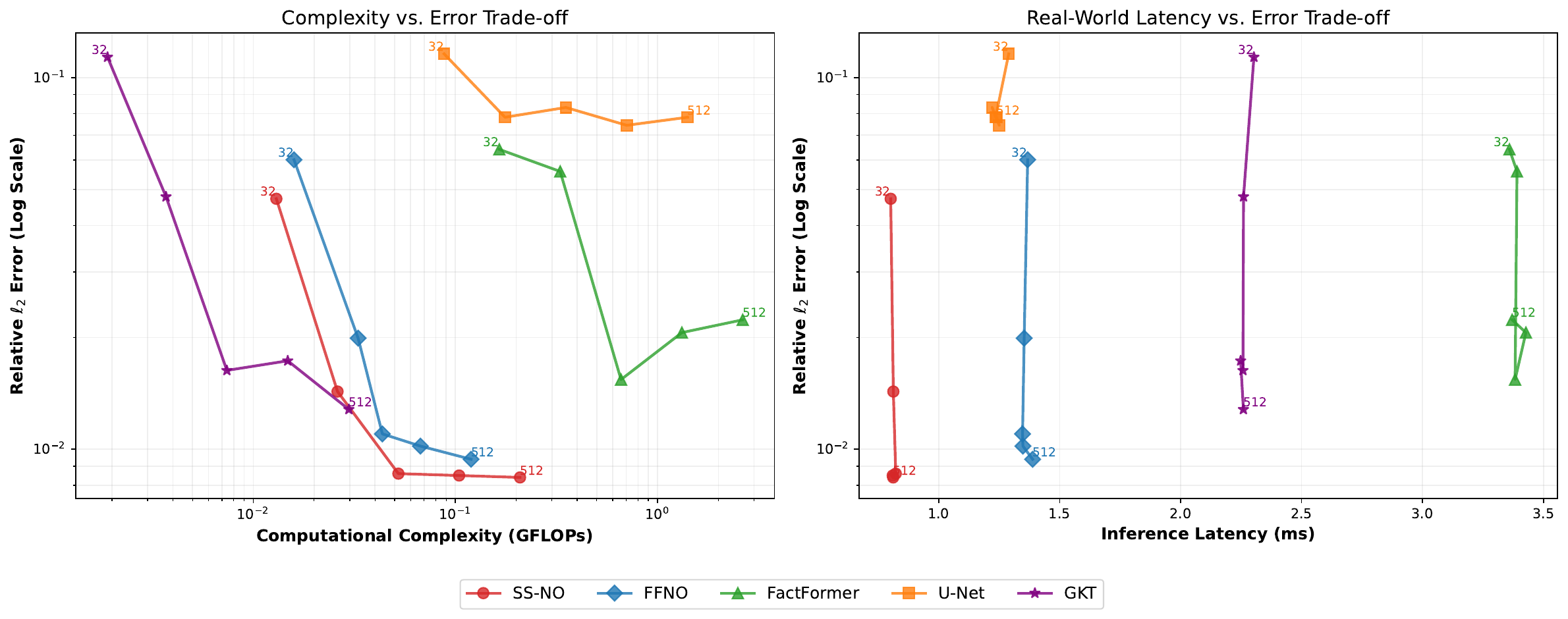}
    \caption{\textbf{Cost-Accuracy Trade-off Analysis (1D KS, $\nu=0.075$).} We compare predictive error against computational cost across resolutions $N \in \{32, \dots, 512\}$. \textbf{Left:} Theoretical Complexity (GFLOPs). SS-NO (Red) dominates the Pareto frontier, achieving the lowest error with significantly fewer operations than FactFormer and U-Net. \textbf{Right:} Empirical Inference Latency (ms). On an NVIDIA A6000 GPU, SS-NO is the fastest model ($<0.85$ ms) and maintains constant runtime scaling. Note the discrepancy for GKT (Purple): despite low theoretical FLOPs, it suffers higher real-world latency due to attention overheads.}
    \label{fig:cost_accuracy}
\end{figure}

\begin{table}[h]
\centering
\caption{\textbf{Computational Cost Summary.} Inference latency (ms) and theoretical complexity (GFLOPs) for all models across the full range of spatial resolutions ($N=32$ to $512$). \textbf{SS-NO} consistently demonstrates the lowest inference latency ($\approx 0.81$ ms) and competitive scaling. While GKT shows very low theoretical FLOPs, its real-world latency is significantly higher due to attention overheads.}
\label{tab:cost_summary_full}
\resizebox{\textwidth}{!}{%
\begin{tabular}{l c c c c c c c c c c}
\toprule
\multirow{2}{*}{\textbf{Model}} & \multicolumn{2}{c}{\textbf{N=32}} & \multicolumn{2}{c}{\textbf{N=64}} & \multicolumn{2}{c}{\textbf{N=128}} & \multicolumn{2}{c}{\textbf{N=256}} & \multicolumn{2}{c}{\textbf{N=512}} \\
\cmidrule(lr){2-3} \cmidrule(lr){4-5} \cmidrule(lr){6-7} \cmidrule(lr){8-9} \cmidrule(lr){10-11}
 & \textbf{Time} & \textbf{GFLOPs} & \textbf{Time} & \textbf{GFLOPs} & \textbf{Time} & \textbf{GFLOPs} & \textbf{Time} & \textbf{GFLOPs} & \textbf{Time} & \textbf{GFLOPs} \\
 & (ms) & & (ms) & & (ms) & & (ms) & & (ms) & \\
\midrule
U-Net & $1.29$ & $0.088$ & $1.24$ & $0.176$ & $1.22$ & $0.352$ & $1.25$ & $0.704$ & $1.24$ & $1.407$ \\
GKT & $2.30$ & $\textbf{0.002}$ & $2.26$ & $\textbf{0.004}$ & $2.26$ & $\textbf{0.007}$ & $2.25$ & $\textbf{0.015}$ & $2.26$ & $\textbf{0.030}$ \\
FactFormer & $3.36$ & $0.165$ & $3.39$ & $0.330$ & $3.38$ & $0.659$ & $3.43$ & $1.319$ & $3.37$ & $2.638$ \\
FFNO & $1.37$ & $0.016$ & $1.35$ & $0.033$ & $1.35$ & $0.044$ & $1.35$ & $0.067$ & $1.39$ & $0.120$ \\
\textbf{SS-NO (Ours)} & $\mathbf{0.80}$ & $0.013$ & $\mathbf{0.81}$ & $0.026$ & $\mathbf{0.82}$ & $0.052$ & $\mathbf{0.81}$ & $0.104$ & $\mathbf{0.81}$ & $0.209$ \\
\bottomrule
\end{tabular}%
}
\end{table}

\section{Formal Proof of FNO Recovery by SS-NO}
\label{sec:recover}
In this section, we rigorously prove that the State Space Neural Operator (SS-NO) kernel formulation structurally subsumes the Fourier Neural Operator (FNO) kernel as a special case. Specifically, we show that by fixing the SS-NO parameters to specific values (zero damping, harmonic frequencies), the effective spatial convolution kernel of SS-NO becomes mathematically identical to the implicit spatial kernel of the FNO defined via the Discrete Fourier Transform (DFT).

\subsection{The FNO Kernel Formulation}
The Fourier Neural Operator (FNO) operates by computing a convolution in the spectral domain. For a 1D input function $u(x)$ discretized on a grid of $N$ points $\{x_n = \frac{n}{N}\}_{n=0}^{N-1}$, the FNO computes the output $v$ as:
$$v = \mathcal{F}^{-1}(R \cdot \mathcal{F}(u))$$
where $R$ is a learnable weight matrix in the frequency domain. By the Convolution Theorem, this is equivalent to a circular spatial convolution $v = u * \kappa_{FNO}$, where the spatial kernel $\kappa_{FNO}$ is the IDFT of the spectral weights $R$.
Let $R_k$ denote the spectral weight matrix for the $k$-th frequency mode. The spatial kernel $\kappa_{FNO}$ evaluated at a grid point $x_n$ is:
$$\kappa_{FNO}[n] = \sum_{k=0}^{N-1} R_k e^{i \frac{2\pi k}{N} n} \quad \text{(Equation A)}$$

\subsection{The SS-NO Kernel Formulation}
The SS-NO applies a convolution with a continuous kernel derived from the state equation. As detailed in Equation~\ref{eq:spatial-ssm} of the main text, the effective spatial kernel is:
$$\kappa_{SSM}(x) = \sum_{m=1}^{M} C_m B_m^\top e^{-\rho_m |x|} e^{i \omega_m x}$$
Evaluating this on the discrete grid $x_n = \frac{n}{N}$ (assuming forward direction):
$$\kappa_{SSM}[n] = \sum_{m=1}^{M} (C_m B_m^\top) e^{-\rho_m \frac{n}{N}} e^{i \omega_m \frac{n}{N}} \quad \text{(Equation B)}$$

\subsection{Proof of Equivalence}
We demonstrate that Equation B (SS-NO) can exactly recover Equation A (FNO) under the following restrictions:
* Zero Damping ($\rho = 0$): Set $\rho_m = 0$ for all states, eliminating the decay term ($e^0 = 1$).
* Harmonic Frequencies ($\omega = 2\pi k$): Set the continuous frequency parameter $\omega_m = 2\pi k$. On the grid, the exponent becomes $e^{i (2\pi k) \frac{n}{N}} = e^{i \frac{2\pi k}{N} n}$, recovering the DFT basis.
* Spectral Weights (SVD Decomposition): Unlike FNO, which uses a full-rank matrix $R_k$ for channel mixing, a single SSM state provides a rank-1 mixing term $C_m B_m^\top$. However, any matrix $R_k$ can be decomposed into a sum of $J$ rank-1 terms (e.g., via Singular Value Decomposition): $R_k = \sum_{j=1}^{J} C_{k,j} B_{k,j}^\top$.
To recover FNO, we set the number of SSM states to $M = N \times J$. We assign $J$ states to each frequency mode $k$, indexed by $j$. By initializing the parameters such that $\sum_{j=1}^{J} C_{k,j} B_{k,j}^\top = R_k$, Equation B becomes:
$$\kappa_{SSM}[n] = \sum_{k=0}^{N-1} \left( \sum_{j=1}^{J} C_{k,j} B_{k,j}^\top \right) e^{0} e^{i \frac{2\pi k}{N} n} = \sum_{k=0}^{N-1} R_k e^{i \frac{2\pi k}{N} n}$$

\paragraph{Conclusion.}
This is mathematically identical to Equation A. Thus, the SS-NO formulation strictly subsumes the FNO kernel parametrization. SS-NO is a generalization because it allows for $\rho > 0$ (localized kernels), $\omega \ne 2\pi k$ (adaptive frequencies), and low-rank approximations where fewer than $J$ states are used per frequency.

\section{Extensions to Irregular Geometries and Complex Unknown Systems}
\label{app:irregular_domains}

A primary motivation for Neural Operators is their ability to model systems where the governing PDEs are either \textbf{unknown, partially known, or computationally intractable} to solve directly. In such regimes, the operator must learn the underlying physics purely from observational data. In this section, we demonstrate SS-NO's capability in these challenging scenarios by extending it to irregular domains via coordinate transformation and slicing.

\subsection{Symmetry, Equivariance, and Geometric Inductive Bias}
We first explicitly acknowledge a limitation raised by reviewers: the factorized scanning mechanism employed in our main text (processing dimensions sequentially, e.g., $x \to y$) is \textbf{not inherently equivariant} to rotation or reflection. Unlike the isotropic kernels of standard CNNs or the global spectral bases of FNOs, the SSM scan introduces a directional inductive bias. For example, rotating a fluid field by $90^\circ$ changes the order in which the SSM encounters features, potentially altering the output. Similarly, for periodic boundary conditions, the standard causal scan ($t \to t+1$) breaks translation symmetry unless specific circular padding schemes are used.

To reconcile this, we explore combining SS-NO with existing coordinate transformation frameworks, specifically utilizing the \textbf{diffeomorphic mappings} introduced in Geo-FNO~\citep{li2023fourier}. By decoupling the scanning grid from the physical domain, we can maintain the computational efficiency of the factorized scan while respecting the geometric properties of the data.

\subsection{Approach 1: Recovering Symmetry via Transformations (Shallow Water Equations)}
To validate SS-NO on a system with complex topology and no explicit PDE supervision, we test on the spherical Shallow Water Equations (SWE). 

To address the symmetry breaking discussed above, we adopt the coordinate transformation approach from Geo-FNO~\citep{li2023fourier}. This method learns a \textbf{diffeomorphism} that maps the physical manifold (the sphere) to a uniform computational latent grid. 
Crucially, because this mapping is diffeomorphic, it can preserve spatial symmetries—such as rotation and reflection—in the physical space, even if the internal computational scan remains sequential. The transformation layer effectively ``unwraps'' the domain into a canonical representation where the directional bias of the SSM is mitigated.

Experimental results against the Spherical FNO (SFNO) suggest that SS-NO's \textit{adaptive frequency learning} is uniquely suited for this transformed space. In a mapped domain, physical waves often manifest as warped, non-harmonic oscillations that fixed Fourier bases struggle to represent sparsely. SS-NO automatically adapts its continuous frequency parameters $\omega_m$ to these distorted geometries, effectively learning a physics-compliant basis from data.

To rigorously test whether this combination has learned the underlying operator, we utilized a strict evaluation protocol: models were trained on a short horizon (10 steps) but evaluated on stability during long rollouts up to $t=100$. As shown in Table~\ref{tab:swe_results}, SS-NO demonstrates superior stability, achieving $\sim 2.2\times$ lower error than the baseline at $t=100$, confirming that the learned diffeomorphism successfully preserved the necessary global dynamics.

\subsection{Approach 2: Complex Aerodynamics via Slicing (AirfRANS)}
We further evaluate SS-NO on the \textbf{AirfRANS} dataset~\citep{bonnet2022airfrans}, representing a ``partially known'' or complex system where analytic solutions are unavailable. This task involves predicting velocity, pressure, and skin friction fields over NACA airfoils. Here, the ``true'' PDE (Navier-Stokes) is computationally prohibitive for design optimization, and the model must act as a surrogate learned entirely from RANS simulation data.

For this irregular mesh problem, we adopt the slicing-based approach from Transolver~\citep{wutransolver}. We integrated the SS-NO block into the Transolver architecture, replacing its core attention mechanism. As shown in Table~\ref{tab:airfrans}, SS-NO achieves competitive regression performance, significantly lowering \textbf{Volume Error} ($0.0017$ vs $0.0037$) and \textbf{Surface Error} ($0.0046$ vs $0.0142$) compared to the baseline.

However, the model lags slightly in derived physical coefficients (Lift $C_L$). We hypothesize this is due to the Linear Time-Invariant (LTI) nature of the standard S4 backbone, which applies uniform dynamics globally. Complex aerodynamics require spatially varying dynamics (e.g., distinguishing laminar leading edges from turbulent wakes). We anticipate that replacing the S4 backbone with input-dependent State Space Models (e.g., \textbf{Mamba}~\citep{gumamba}), which can dynamically modulate parameters based on the local flow state, would bridge this gap.

\begin{table}[h]
\centering
\caption{\textbf{Spherical Shallow Water Equations (SWE) Benchmark.} Comparison of relative $\ell_2$ errors on spherical SWE forecasting over long time horizons. Models were trained on a short horizon (10 steps) and evaluated up to $t=100$. SS-NO demonstrates superior stability and generalization, achieving $\sim 2.2\times$ lower error than the Spherical FNO (SFNO) at $t=100$, indicating it has effectively learned the global atmospheric dynamics from data.}
\label{tab:swe_results}
\begin{tabular}{lccc}
\toprule
\textbf{Evaluation Step} & \textbf{SFNO (Baseline)} & \textbf{SS-NO (Ours)} & \textbf{Improvement} \\
\midrule
$t=20$  & $6.56 \times 10^{-2}$ & $\mathbf{2.63 \times 10^{-2}}$ & $2.5\times$ \\
$t=50$  & $5.99 \times 10^{-1}$ & $\mathbf{2.54 \times 10^{-1}}$ & $2.4\times$ \\
$t=100$ & $2.82 \times 10^{0}$  & $\mathbf{1.28 \times 10^{0}}$  & $2.2\times$ \\
\bottomrule
\end{tabular}
\end{table}

\begin{table}[h]
\centering
\caption{\textbf{AirfRANS Benchmark (Data-Driven Aerodynamics).} Comparison of SS-NO against Transolver on the AirfRANS dataset. This task represents a ``partially known'' system where the model must learn complex aerodynamic mappings purely from data. SS-NO excels at field reconstruction (Volume/Surface error) but requires input-dependent gating (e.g., Mamba) to better capture global coefficients like Lift.}
\label{tab:airfrans}
\begin{tabular}{lcccc}
\toprule
\textbf{Model} & \textbf{Error Vol.} ($\downarrow$) & \textbf{Error Surf.} ($\downarrow$) & \textbf{Lift Coef. $C_L$} ($\downarrow$) & \textbf{Spearman $\rho_L$} ($\uparrow$) \\
\midrule
Transolver & $0.0037$ & $0.0142$ & $\mathbf{0.1030}$ & $\mathbf{0.9978}$ \\
\textbf{SS-NO (Ours)} & $\mathbf{0.0017}$ & $\mathbf{0.0046}$ & $0.1098$ & $0.9973$ \\
\bottomrule
\end{tabular}
\end{table}

\section{Background: The MemNO Framework}
\label{app:memno_background}

In this work, we adopt the training and inference strategy proposed in the \textbf{Memory Augmented Neural Operator (MemNO)} framework~\cite{buitragobenefits}. As this framework serves as the temporal backbone for our experiments, we provide a summary of its theoretical motivation and architectural design here.

\subsection{Theoretical Motivation: Mori-Zwanzig and Coarse-Graining}
The primary motivation for MemNO stems from the \textbf{Mori-Zwanzig formalism} in statistical mechanics. This theory addresses the problem of modeling a dynamical system when the full state is not observable—specifically, when the system is projected onto a lower-dimensional or coarser representation (e.g., discretizing a PDE onto a coarse spatial grid).

Even if the underlying PDE (the microscopic dynamics) is Markovian—meaning the future depends only on the current exact state—the \textit{coarse-grained} dynamics are generally \textbf{non-Markovian}. The resolved (observable) scales interact with the unresolved (sub-grid) scales, and the influence of these unresolved scales manifests as a ``memory'' of the system's history. Mathematically, the evolution of the coarse variables depends on an integral over the past states (the memory kernel).

Standard autoregressive Neural Operators often ignore this, treating the coarse mapping as Markovian ($\hat{u}_{t+1} = \mathcal{G}(u_t)$). MemNO argues that to accurately model coarse-grained PDE trajectories, the architecture must explicitly account for this memory term to compensate for the loss of information due to spatial discretization.

\subsection{Architectural Implementation: S4 Temporal Layers}
MemNO employs S4 layers along the temporal dimension. The temporal S4 mechanism allows the model to efficiently compress the entire history of the input function into a compact latent state. This enables the model to effectively learn the non-Markovian memory kernel required to close the system dynamics, essentially using temporal information to recover spatial details lost during coarse-graining.

\subsection{Adoption in This Work}
We adopted the MemNO framework for our temporal benchmarks for two key reasons:
\begin{enumerate}
    \item \textbf{Handling Coarse Resolutions:} Many of our benchmarks (e.g., Navier-Stokes at $64 \times 64$) operate in regimes where the spatial grid does not fully resolve the turbulence. The MemNO framework allows all models to utilize temporal history to compensate for this aliasing, ensuring a physically rigorous evaluation.
    \item \textbf{Standardized Temporal Backbone:} By fixing the temporal architecture to use MemNO's S4-based time mixing for all baselines (SS-NO, FNO, FactFormer, etc.), we ensure a fair comparison. This isolates the contribution of the \textit{spatial} mixing architectures, confirming that the performance gains reported for SS-NO arise from its superior spatial processing rather than an advantageous temporal integrator.
\end{enumerate}

\section{Limitations}
\label{app:limitations}
Despite the significant reduction in model size and strong performance achieved by SS-NO, certain limitations remain inherent to the current formulation.

First, the \textbf{accumulation of gradients} during autoregressive training poses a scalability bottleneck. While SS-NO is parameter-efficient, increasing the state dimension $D$ or the sequence length $L$ linearly increases the memory required for backpropagation through time. This restricts the feasibility of adopting extremely large latent states without specialized techniques. However, unlike language modeling where infinite-horizon dependencies are crucial, PDE dynamics are often effectively Markovian, suggesting that techniques such as truncated backpropagation or gradient checkpointing could provide effective mitigation.

Second, the \textbf{sequential scanning mechanism} introduces a directional inductive bias that is not inherently equivariant. Unlike isotropic convolutions or global spectral layers, the state-space scan processes data sequentially (e.g., $x \to y$), which breaks rotational and reflection symmetries. While we have shown in Appendix~\ref{app:irregular_domains} that this can be managed via symmetric embedding layers or coordinate transformations, the core mixing block itself does not guarantee equivariance by construction.

\section{Possible Extensions and Future Work} \label{app:future_work}

We see several promising directions to extend the SS-NO framework, particularly in enhancing its expressivity for complex dynamics and further generalizing its geometric capabilities.

\paragraph{Input-Dependent Backbones (Mamba and Hydra).}
A distinct advantage of the SS-NO framework is its modularity; the core scanning backbone can be swapped for more advanced architectures. While the current implementation utilizes efficient Linear Time-Invariant (LTI) systems, a natural extension is to incorporate \textbf{input-dependent backbones} such as Mamba~\citep{gumamba} or Hydra \citep{hwang2024hydra}. These models allow state transition matrices to be functions of the input field, enabling the operator to dynamically modulate its effective receptive field and frequency response based on local flow features (e.g., shock waves or boundary layers). This would be particularly beneficial for multi-regime systems like the AirfRANS benchmark discussed in Appendix~\ref{app:irregular_domains}.

\paragraph{Native Graph and Manifold Learning.}
While we successfully demonstrated extending SS-NO to irregular domains via slicing and coordinate transformation, these methods essentially project data onto regular latent structures. A more fundamental approach is to generalize the scanning operator to work \textit{natively} on graphs or meshes.
Recent works like Graph Mamba~\citep{wang2024graph} and pLSTM~\citep{poppel2025plstm} define propagation orders over graph topologies. Adopting these backbones would allow SS-NO to process unstructured meshes without the need for interpolation or slicing, offering a principled way to handle complex engineering geometries.

\paragraph{True Geometric Recurrence.}
A more theoretical direction is to redefine the recurrence relation itself using geometric differential operators. This would entail replacing the ordinary time derivative in the SSM with a \textbf{covariant derivative},
\[
\nabla_{\dot{\gamma}(t)} h = A h + B u,
\]
incorporating notions of parallel transport to evolve the hidden state along geodesics. Such an extension would enable genuinely Riemannian state evolution, preserving vector field symmetries on curved manifolds by construction, though it introduces significant mathematical complexity.

\paragraph{Memory-Efficient Training Strategies.}
Finally, improving the memory efficiency of autoregressive training remains a priority. Given that turbulent flows often exhibit chaotic but bounded attractors, exact gradient retention over long horizons may yield diminishing returns. We plan to investigate \textbf{truncated backpropagation} and \textbf{implicit differentiation} strategies specifically optimized for Neural Operators, aiming to balance the fidelity of long-term gradients with the computational constraints of high-resolution 3D simulations.

\end{document}